\documentclass[11pt]{article}

\usepackage[final]{acl}

\usepackage{times}
\usepackage{latexsym}
\usepackage[T1]{fontenc}
\usepackage[utf8]{inputenc}
\usepackage{microtype}
\usepackage{inconsolata}

\usepackage{graphicx}
\usepackage{subcaption}
\usepackage{booktabs}
\usepackage{enumitem}
\usepackage{listings}
\usepackage{amsmath}
\usepackage{amssymb}
\usepackage{mathtools}
\usepackage{amsthm}
\usepackage{multirow}
\usepackage{bm}
\usepackage{tikz}
\usepackage{colortbl}
\usepackage{algorithm}
\usepackage{algorithmic}
\usepackage[capitalize,noabbrev]{cleveref}
\usepackage[disable,textsize=tiny]{todonotes}

\newcommand*\rot{\rotatebox{90}}

\usetikzlibrary{positioning, fit, backgrounds, arrows.meta, shapes.geometric, calc, shadows}

\definecolor{primaryBlue}{HTML}{4A90D9}
\definecolor{lightBlue}{HTML}{E8F1FA}
\definecolor{primaryGreen}{HTML}{5CB85C}
\definecolor{lightGreen}{HTML}{E8F5E8}
\definecolor{accentOrange}{HTML}{F5A623}
\definecolor{lightOrange}{HTML}{FEF5E7}
\definecolor{accentRed}{HTML}{E74C3C}
\definecolor{lightRed}{HTML}{FDEDEC}
\definecolor{darkGray}{HTML}{4A4A4A}
\definecolor{medGray}{HTML}{9B9B9B}
\definecolor{lightGray}{HTML}{F5F5F5}

\theoremstyle{plain}
\newtheorem{theorem}{Theorem}[section]
\newtheorem{proposition}[theorem]{Proposition}

\theoremstyle{definition}

\theoremstyle{remark}

\title{Representation-based Masked Diffusion Model}

\author{
  Yangrong Hu\textsuperscript{1},
  Ding Huang\textsuperscript{2,\textdaggerdbl},
  Xueyu Zhou\textsuperscript{1},
  \and Jian Huang\textsuperscript{1,2,\textdagger} \\
  \textsuperscript{1}Department of Data Science and Artificial Intelligence \\
  \textsuperscript{2}Department of Applied Mathematics \\
  The Hong Kong Polytechnic University, Hong Kong SAR, China \\
  \small\textit{\{yangrong.hu,ding.huang,xueyu.zhou\}@connect.polyu.hk} \\
  \small\textit{j.huang@polyu.edu.hk}
}

\begin{document}
\maketitle
\begingroup
\renewcommand{\thefootnote}{\textdagger}
\footnotetext{\raggedright Corresponding author: Jian Huang.}
\endgroup
\begingroup
\renewcommand{\thefootnote}{\textdaggerdbl}
\footnotetext{\raggedright Present address: ByteDance Seed, Beijing, China. Email: \texttt{huanhgding@bytedance.com}.}
\endgroup

\begin{abstract}

    Masked Diffusion Models (MDMs) have emerged as a compelling paradigm for language modeling, offering the capability for efficient parallel text generation. 
    However, existing parallel sampling methods typically update multiple masked tokens independently and ignore the complex mutual dependencies among the masked tokens. 
    This independent updating mechanism lacks global coordination and might lead to incoherent outputs. To address this limitation, we propose Representation-based Masked Diffusion Model (RMDM),
    a framework that leverages the text representation to explicitly encode global semantics and help to parallel update tokens more precisely. 
    Specifically, we first encode text into a continuous semantic space using a pretrained encoder and learn an invertible transformation that normalizes the representation distribution to a Gaussian prior, facilitating efficient sampling during generation. 
    Conditioned on this latent semantic representation, we train a masked diffusion model to learn the conditional text distribution, where the representation serves as global semantic guidance to coordinate parallel token updates and faithfully approximate the target distribution.
    Empirical results demonstrate that RMDM significantly improves generation quality, particularly in aggressive few-step sampling regimes.
\end{abstract}

\begin{figure*}[!t]
    \centering
    \resizebox{0.8\linewidth}{!}{
    \begin{tikzpicture}[
        font=\sffamily,
        node distance=1.0cm and 1.0cm,
        base/.style={
            align=center,
            line width=0.8pt,
            font=\small
        },
        tensor/.style={
            base,
            draw=primaryBlue!80, 
            fill=lightBlue, 
            rounded corners=3pt,
            minimum height=0.8cm, 
            minimum width=1.2cm
        },
        process/.style={
            base,
            draw=darkGray, 
            fill=white, 
            rounded corners=3pt, 
            minimum height=0.8cm, 
            minimum width=1.6cm,
            drop shadow={opacity=0.1}
        },
        latent/.style={
            base,
            circle, 
            draw=accentRed!80, 
            fill=lightRed, 
            minimum size=1.0cm, 
            inner sep=1pt
        },
        encoder/.style={
            base,
            rounded corners=3pt,
            draw=accentOrange!80, 
            fill=lightOrange, 
            minimum height=0.8cm,
            minimum width=1.2cm
        },
        model/.style={
            base,
            rounded corners=4pt,
            minimum height=1.4cm, 
            minimum width=2.8cm,
            draw=primaryGreen!80, 
            fill=lightGreen
        },
        arrow/.style={-{Stealth[length=6pt, width=4pt]}, line width=0.8pt, color=darkGray},
        cond_arrow/.style={-{Stealth[length=6pt, width=4pt]}, line width=1.0pt, color=accentRed, dashed},
        label/.style={font=\scriptsize\bfseries, color=darkGray}
    ]

    \node[encoder] (enc_shape) at (0,0) {Encoder\\$g_\psi$};
    
    \node[tensor, right=1.2cm of enc_shape, fill=lightOrange, draw=accentOrange!80] (z1) {$\bm{z}_1$};
    
    \node[process, right=1.2cm of z1, fill=lightBlue, draw=primaryBlue!80, minimum width=2.2cm] (meanflow) {MeanFlow\\$u_\phi$};
    
    \node[latent, right=1.2cm of meanflow] (z0) {$\bm{z}_0$};
    \node[right=0.2cm of z0, font=\scriptsize, color=medGray] (gaussian) {$\sim\mathcal{N}(\bm{0},\bm{I})$};

    \draw[arrow] (enc_shape) -- (z1);
    \draw[arrow] (z1) -- (meanflow);
    \draw[arrow] (meanflow) -- (z0);

    
    \node[tensor, below=0.8cm of enc_shape] (x0) {$\bm{x}_0$};
    
    \node[process, right=1.2cm of x0, fill=lightGray] (mask) {Mask};
    
    \node[tensor, right=1.4cm of mask] (xt) {$\bm{x}_t$};
    
    \node[model, anchor=center] (mdlm) at (z0 |- x0) {\textbf{RMDM}\\$p_\theta(\bm{x}_0 | \bm{x}_t, \bm{z}_0)$};
    
    \node[tensor, right=0.6cm of mdlm, fill=lightGreen, draw=primaryGreen!80] (x_hat) {$\hat{\bm{x}}_0$};

    \draw[arrow] (x0) -- (mask);
    \draw[arrow] (mask) -- (xt);
    \draw[arrow] (xt) -- (mdlm);
    \draw[arrow] (mdlm) -- (x_hat);
    
    \draw[arrow] (x0) -- (enc_shape); 
    
    \draw[cond_arrow] (z0) -- node[midway, right, font=\scriptsize, color=accentRed] {} (mdlm);

    \begin{scope}[on background layer]
        \node[fit=(enc_shape)(z1)(meanflow)(z0)(gaussian), rounded corners=5pt, fill=lightOrange!20, draw=accentOrange!30, dashed] (stage1) {};
        \node[above, font=\scriptsize\bfseries, color=accentOrange!80] (stage1_label) at (stage1.north) {Stage I: Representation Mapping};
        
        \node[fit=(x0)(mask)(xt)(mdlm)(x_hat), rounded corners=5pt, fill=lightGreen!20, draw=primaryGreen!30, dashed] (stage2) {};
        \node[below, font=\scriptsize\bfseries, color=primaryGreen!80] at (stage1_label |- stage2.south) {Stage II: Generative Modeling};
    \end{scope}

    \end{tikzpicture}%
    }
    \caption{
        Stage I aligns the latent space $\bm{z}_0$, derived from a pretrained encoder $g_{\psi}(\bm{x}_0)$, with a Gaussian prior via MeanFlow $u_\phi$. Stage II trains the RMDM $p_\theta$ to reconstruct original data $\bm{x}_0$ from corrupted data $\bm{x}_t$ guided by the latent representation $\bm{z}_0$.}
    \label{fig:framework}
\end{figure*}
\section{Introduction}

Large Language Models (LLMs) have achieved huge success for modeling text data ~\citep{brown2020language, gpt2, gpt4, deepseekv3}, primarily through the autoregressive (AR) paradigm.
While AR models excel at capturing complex dependencies by generating tokens sequentially, their inference speed is fundamentally constrained by their $O(L)$ sequential complexity for a sequence of length $L$. This latency bottleneck has spurred significant interest in non-autoregressive or parallel generation frameworks~\citep{gu2018non}.
Masked Diffusion Models (MDMs) ~\citep{austin2021structured, sahoo2024mdlm, lou2024discrete, ou2025radd} have recently emerged as a compelling alternative that enables parallel token generation via iterative denoising.
Compared to continuous diffusion models, MDMs utilize a \emph{discrete diffusion} process, where the forward process progressively adds noise to a clean sequence by replacing its subset with special \texttt{[MASK]} tokens instead of gaussian noise and yields a partially observed context. 
A denoiser is then trained to invert this corruption by predicting the original tokens from the context.
In the backward process, MDMs utilize the denoiser to generate the whole sequence by repeatedly refining a sentence that consists entirely of masked tokens.

In practical deployment, MDMs are required to efficiently update multiple masked tokens in a single iteration. 
Most parallel methods~\citep{llada,llada1.5,dream7b,chang2022maskgit,fastdllm} sample each masked position independently conditioned on the currently observed context, while ignoring the dependency between masked tokens.
Such independent sampling approximates the joint distribution over the masked positions with a product of token-wise marginal distributions.
However, the factorized approximation is generally misaligned with natural language, where masked tokens often exhibit strong residual dependencies induced by syntactic agreement, long-range constraints, multi-word expressions, and entity consistency.
As a result, aggressive parallelization can produce sequences that are locally plausible at each position yet globally inconsistent under the true joint distribution. We formalize and quantify this \textbf{Conditional Dependency Gap} in Section~\ref{sec:method:gap}, and build our method around mitigating it without sacrificing parallelism and efficiency.
\paragraph{Our approach.}
To mitigate this inconsistency, we introduce a global latent representation $z$ that encodes the semantic information of the whole sequence.
This representation captures shared global factors, such as topic and intent, which largely explain the remaining dependence among masked tokens.
Therefore, conditioning on $z$  reduces residual cross-token dependence and makes parallel prediction closer to the true joint conditional distribution.
We formalize this intuition as a reduction of the \emph{Conditional Dependency Gap} in Section~\ref{sec:method:gap}.
To make the latent variable $z$ operational, we use pretrained encoders, such as BERT models~\citep{bert,modernbert} and modern large-scale embedding models~\citep{qwen3embedding}, to extract contextual representations of the input text.
The representation encodes global attributes of the underlying sequence and captures high-level semantic factors that are difficult to infer from a partially revealed context.
By providing this missing global information, the representation serves as a natural candidate for $z$ and helps reduce the Conditional Dependency Gap during parallel updates.
A key complication is that these representations are readily available from the ground-truth text during training but are unobserved at inference time.
Directly conditioning on such embeddings would therefore introduce a train--test mismatch unless $z$ can be sampled from a well-defined prior during generation.
To resolve the training--inference mismatch, we employ MeanFlow~\citep{shi2024meanflow} to map the empirical distribution of pretrained sequence representations to a Gaussian prior. This transport allows us to directly sample a global latent $z \sim \mathcal{N}(0,I)$ at generation time, which serves as a global condition for the RMDM to ensure coherent parallel updates.
Although variational inference~\citep{VAE, xie2025variationalautoencodingdiscretediffusion} is a classical approach to Gaussian latents, its reconstruction and KL objectives often conflict in high-capacity language models, leading to unstable optimization or posterior collapse. Approaches such as CCDD~\citep{zhou2025coevolutionarycontinuousdiscretediffusion} co-evolve continuous and discrete diffusions, whereas RMDM samples a fixed latent once and reuses it throughout discrete denoising. The pretrained encoder and MeanFlow network are used to construct training latents but are not invoked during generation.
Experimental results demonstrate that RMDM consistently outperforms standard MDMs in sample quality for a fixed number of sampling steps and achieves approximately $3.6\times$ higher measured throughput at matched generation quality in our A800 evaluation.
We provide a more detailed discussion of related discrete diffusion language models, continuous diffusion language models, and latent-augmented masked diffusion methods in Appendix~\ref{app:related}.

\section{Preliminaries}

\subsection{Continuous Diffusion and MeanFlow}
\label{sec:pre:continuous}

Continuous diffusion models ~\citep{ho2020denoising, song2020score} are built on the principle of a forward process that gradually corrupts data $\bm{z}_0 \in \mathbb{R}^d$ into Gaussian noise $\bm{z}_1 \sim \mathcal{N}(\bm{0}, \bm{I})$ over a continuous time $t \in [0, 1]$. The generative process is then defined by learning to reverse this corruption—essentially "denoising" the latent—by following a velocity field that guides the noise back to the data manifold. Flow matching ~\citep{lipman2022flow} simplifies this by training a model $\bm{v}_\theta$ to match the conditional velocity field:
\begin{align*}    
    \mathcal{L}_{\text{CFM}}(\theta) = \mathbb{E}_{t, \bm{z}_0, \bm{z}_1} \left[ \| \bm{v}_\theta(\bm{z}_t, t) - (\bm{z}_1 - \bm{z}_0) \|^2 \right],
\end{align*}
where $\bm{z}_t = (1-t)\bm{z}_0 + t\bm{z}_1$ represents the linear interpolation between data and noise. While standard flow matching requires numerical ODE integration over multiple steps, MeanFlow~\citep{shi2024meanflow} enables efficient one-step generation by modeling the \textit{average velocity} $u(\bm{z}_t, r, t) \triangleq \frac{1}{t-r} \int_r^t \bm{v}(\bm{z}_\tau, \tau) d\tau$. The model $u_\theta$ is trained via the MeanFlow identity:
\begin{align*}
    \resizebox{.95\linewidth}{!}{$
    \mathcal{L}_{\text{MF}}(\theta) = \mathbb{E} \left\| u_\theta(\bm{z}_t, r, t) - \text{sg} \left[ \bm{v}_t - (t-r) \frac{d}{dt} u_\theta(\bm{z}_t, r, t) \right] \right\|^2
    $}
\end{align*}
Where $sg$ is the stop gradient operator.
Once optimized, $\bm{z}_0$ can be recovered from noise $\bm{z}_1$ in a single step as $\bm{z}_0 = \bm{z}_1 - u_\theta(\bm{z}_1, 0, 1)$. 

\subsection{Masked Diffusion Models}
\label{sec:pre:mdm}

While continuous diffusion is well-suited for continuous distributions, language modeling requires a principled approach to handle discrete categorical variables. Masked diffusion models (MDMs) ~\citep{austin2021structured, lou2024discrete} extend the diffusion framework to discrete state spaces by defining a corruption process that gradually replaces tokens $\bm{x}_0$ with an absorbing \texttt{[MASK]} while a sequence with full \texttt{[MASK]} is denoted as state $\bm{m}$.

Similar to the continuous case, the forward process $q(\bm{x}_t | \bm{x}_0) = \text{Cat}(\bm{x}_t; \alpha_t \bm{x}_0 + (1 - \alpha_t) \bm{m})$ interpolates between clean data and the masked prior, where $\alpha_t \in [0, 1]$ is a decreasing noise schedule. The generative model $p_\theta(\bm{x}_s | \bm{x}_t)$ learns to invert this masking process by approximating the closed-form posterior:
\begin{align*}
    \resizebox{.99\linewidth}{!}{$
    q(\bm{x}_s | \bm{x}_t, \bm{x}_0) = 
    \begin{cases} 
        \delta_{\bm{x}_t} & \text{if } \bm{x}_t \neq \bm{m}, \\
        \text{Cat}\left(\bm{x}_s; \frac{\alpha_s - \alpha_t}{1 - \alpha_t} \bm{x}_0 + \frac{1 - \alpha_s}{1 - \alpha_t} \bm{m}\right) & \text{if } \bm{x}_t = \bm{m}
    \end{cases}
    $}
\end{align*}
By training a denoising model $p_\theta(\bm{x}_0 | \bm{x}_t)$, MDMs can be optimized via an objective derived from the data log-likelihood $\log p_\theta(\bm{x}_0)$:
\begin{align}
    \mathcal{L}(\theta)
    &=
    \int_0^1 \frac{1}{t}
    \mathbb{E}_{q(\bm{x}_t|\bm{x}_0)}
    \left[
    \sum_{i\in \mathcal{I}_t}
    -\log p_\theta(\bm{x}_0^i|\bm{x}_t)
    \right]
    dt
\label{eq:mdm_elbo}
\end{align}
where $\mathcal{I}_t=\{i:\bm{x}_t^i=\texttt{[MASK]}\}$.
This loss allows the model to predict the categorical distribution of original tokens at each masked position, providing a discrete analogue to the continuous denoising flow described above.

\section{Methodology}
\label{sec:method}


\subsection{The Conditional Dependency Gap}
\label{sec:method:gap}
In this section, we denote $\bm{x}_M$ as the masked data, and $\bm{x}_U$ as the unmasked data in time $t$ of the diffusion process.
A fundamental challenge in Masked Diffusion Models (MDMs) arises from the discrepancy between their training objective and the parallel sampling strategy required for efficiency. While the standard objective~Equation~\ref{eq:mdm_elbo} optimizes the reconstruction of individual tokens $p_\theta(x_i \mid \bm{x}_U)$ independently, fast generation typically necessitates sampling all masked tokens $\bm{x}_M$ simultaneously. Existing parallel samplers implicitly approximate the joint posterior via factorization:
\begin{align*}
    & p_\theta(\bm{x}_M \mid \bm{x}_U) 
    \triangleq  \prod_{i \in M} p_\theta(x_i \mid \bm{x}_U) \\
    \approx & \prod_{i \in M} p_{\text{data}}(x_i \mid \bm{x}_U) \neq p_{\text{data}}(\bm{x}_M \mid \bm{x}_U),
\end{align*}
where $M$ and $U$ denote the sets of masked and unmasked indices, respectively. This factorization assumes that masked tokens are conditionally independent given the context $\bm{x}_U$. However, for structured data, this assumption is often violated, leading to a \textbf{Conditional Dependency Gap}. We quantify this gap using the conditional total correlation 
\begin{figure}[!t]
    \centering
    \begin{minipage}[c]{0.35\linewidth}
        \caption{\textbf{Causal graph.} The global latent $\bm{z}$ acts as a common cause for masked tokens. Conditioning on $\bm{z}$ explains away the correlation between $x_M^1$ and $x_M^2$.}
        \label{fig:dependency_graph}
    \end{minipage}
    \hfill
    \begin{minipage}[c]{0.6\linewidth}
        \centering
        \begin{tikzpicture}[
            node distance=1.0cm and 0.8cm,
            mynode/.style={draw, circle, minimum size=0.85cm, inner sep=0pt, align=center, thick},
            arrow/.style={-stealth, thick}
        ]
            \node[mynode] (z) {$\bm{z}$};
            \node[mynode, below=of z] (xu) {$\bm{x}_U$};
            \node[mynode, left=of xu] (x1) {$x_M^1$};
            \node[mynode, right=of xu] (x2) {$x_M^2$};

            \draw[arrow] (z) -- (x1);
            \draw[arrow] (z) -- (xu);
            \draw[arrow] (z) -- (x2);

            \draw[arrow, dashed] (xu) -- (x1);
            \draw[arrow, dashed] (xu) -- (x2);
        \end{tikzpicture}
    \end{minipage}%
\end{figure}
\begin{align*}
    \resizebox{.99\linewidth}{!}{$
    \mathcal{T}(\bm{x}_M \mid \bm{x}_U)
    \triangleq
    D_{\mathrm{KL}}\!\left(
    p_{\text{data}}(\bm{x}_M \mid \bm{x}_U)
    \,\middle\|\,
    \prod_{i\in M} p_{\text{data}}(x_i \mid \bm{x}_U)
    \right)
    $}
\end{align*}
which measures the intrinsic dependence among masked tokens that is not explained by the observed context.

To bridge this gap, we introduce a latent variable $\bm{z}$ that encodes global information about $\bm{x}$, thereby capturing shared factors that induce dependencies among the masked tokens.
\begin{align*}
    \resizebox{.99\linewidth}{!}{$
    \mathcal{T}(\bm{x}_M \mid \bm{x}_U, \bm{z}) = D_{\mathrm{KL}}\!\left(p_{\text{data}}(\bm{x}_M \mid \bm{x}_U, \bm{z}) \,\middle\|\, \prod_{i \in M} p_{\text{data}}(\bm{x}_i \mid \bm{x}_U, \bm{z})\right)
    $}
\end{align*}
Conditioning on $\bm{z}$ is able to reduce the independence gap, as formalized below.


\begin{proposition}[Relation between Conditional Independence and Conditional Dependency Gap]
    \label{prop:dependency_reduction}
    The residual dependency gap $\mathcal{T}(\bm{x}_M \mid \bm{x}_U, \bm{z})$ vanishes if and only if the masked tokens $\bm{x}_M$ are conditionally independent given the context $\bm{x}_U$ and the latent variable $\bm{z}$:
    \begin{align*}
        &\mathcal{T}(\bm{x}_M \mid \bm{x}_U, \bm{z}) = 0 \\
        \iff & x_{m_1} \perp\!\!\!\perp x_{m_2} \perp\!\!\!\perp \dots \perp\!\!\!\perp x_{m_{|M|}} \mid (\bm{x}_U, \bm{z})
    \end{align*}
    Furthermore, the introduction of $\bm{z}$ changes the dependency gap by:
    \begin{align*}
        \Delta \mathcal{T} &\triangleq \mathcal{T}(\bm{x}_M \mid \bm{x}_U) - \mathcal{T}(\bm{x}_M \mid \bm{x}_U, \bm{z}) \\
        &= \sum_{i \in M} I(x_i; \bm{z} \mid \bm{x}_U) - I(\bm{x}_M; \bm{z} \mid \bm{x}_U) 
    \end{align*}
    \end{proposition}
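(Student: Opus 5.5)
The plan is to read $\mathcal{T}$ as an expected KL divergence and write it as a conditional total correlation in entropies. With $\bm{x}_U$ and $\bm{z}$ averaged over the data distribution, we have
\begin{align*}
\mathcal{T}(\bm{x}_M \mid \bm{x}_U) &= \sum_{i\in M} H(x_i \mid \bm{x}_U) - H(\bm{x}_M \mid \bm{x}_U),\\
\mathcal{T}(\bm{x}_M \mid \bm{x}_U, \bm{z}) &= \sum_{i\in M} H(x_i \mid \bm{x}_U,\bm{z}) - H(\bm{x}_M \mid \bm{x}_U,\bm{z}).
\end{align*}
Each identity comes from expanding $\log \frac{p(\bm{x}_M\mid\cdot)}{\prod_i p(x_i\mid\cdot)}$ and taking expectations under $p_{\text{data}}$.

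\textbf{First claim.} The quantity $\mathcal{T}(\bm{x}_M\mid\bm{x}_U,\bm{z})$ is an average, over $(\bm{x}_U,\bm{z})$, of KL divergences, each of which is nonnegative. By Gibbs' inequality, the average is zero iff
\[
p(\bm{x}_M\mid\bm{x}_U,\bm{z})=\prod_i p(x_i\mid\bm{x}_U,\bm{z})
\]
for almost every $(\bm{x}_U,\bm{z})$ in the support. Joint factorization of the conditional law is exactly mutual conditional independence of $x_{m_1},\dots,x_{m_{|M|}}$ given $(\bm{x}_U,\bm{z})$. Both directions follow immediately. The only care needed is to state the equivalence almost surely, and to note that if $\mathcal{T}$ is meant pointwise for fixed $(\bm{x}_U,\bm{z})$, the same argument applies verbatim.

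\textbf{Second claim.} I will subtract the two entropy expressions and group the terms:
\[
\Delta\mathcal{T}=\sum_{i\in M}\big[H(x_i\mid\bm{x}_U)-H(x_i\mid\bm{x}_U,\bm{z})\big]-\big[H(\bm{x}_M\mid\bm{x}_U)-H(\bm{x}_M\mid\bm{x}_U,\bm{z})\big].
\]
Each bracket is, by definition, a conditional mutual information. The first is $I(x_i;\bm{z}\mid\bm{x}_U)$ and the second is $I(\bm{x}_M;\bm{z}\mid\bm{x}_U)$, which gives the stated formula.

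\textbf{Main obstacle.} The only delicate point is rigor when $\bm{z}$ is continuous and the $x_i$ are discrete. Here $\bm{z}$ only ever appears in the conditioning, so the entropies of the discrete $x$'s are finite; this needs the vocabulary to be finite, which it is. If preferred, I will avoid entropies entirely and work directly with expected log-ratios. In that form the difference of the two KL integrands telescopes into
\[
\sum_i \log\frac{p(x_i\mid\bm{x}_U,\bm{z})}{p(x_i\mid\bm{x}_U)}-\log\frac{p(\bm{x}_M\mid\bm{x}_U,\bm{z})}{p(\bm{x}_M\mid\bm{x}_U)},
\]
whose expectation is the same mutual-information expression.
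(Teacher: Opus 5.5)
Your proposal is correct and matches the paper's proof essentially step for step. The first claim uses nonnegativity of the averaged KL, with equality almost surely. The second claim applies the entropy form of conditional total correlation twice and groups the differences into the conditional mutual informations $I(x_i;\bm{z}\mid\bm{x}_U)$ and $I(\bm{x}_M;\bm{z}\mid\bm{x}_U)$; your optional log-ratio telescoping only refines this by avoiding entropies when $\bm{z}$ is continuous.
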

    We refer $\mathcal{T}(\bm{x}_M \mid \bm{x}_U, \bm{z})$ as the residual gap.
    When $\mathcal{T}(\bm{x}_M \mid \bm{x}_U, \bm{z}) = 0$, the conditional joint distribution factorizes, so sampling 
    each $x_i \sim p(x_i \mid \bm{x}_U, \bm{z})$ 
    independently is equivalent to sampling the entire block $\bm{x}_M \sim p(\bm{x}_M \mid \bm{x}_U, \bm{z})$, enabling exact parallel sampling. 
    The proof is provided in Appendix~\ref{app:proof_prop1}.
    Figure~\ref{fig:dependency_graph} shows an example of the casual graph for $x_M$, $\bm{x}_U$ and $\bm{z}$.  More detailed numerical calculation examples are shown in Appendix~\ref{app:example_calculation}.


\subsection{Representation-based Masked Diffusion Model}
\label{sec:method:meanflow}

\begin{figure}[t]
    \centering
    \captionsetup[subfigure]{labelformat=empty}
    \begin{subfigure}[b]{0.22\textwidth}
        \centering
        \includegraphics[width=\linewidth]{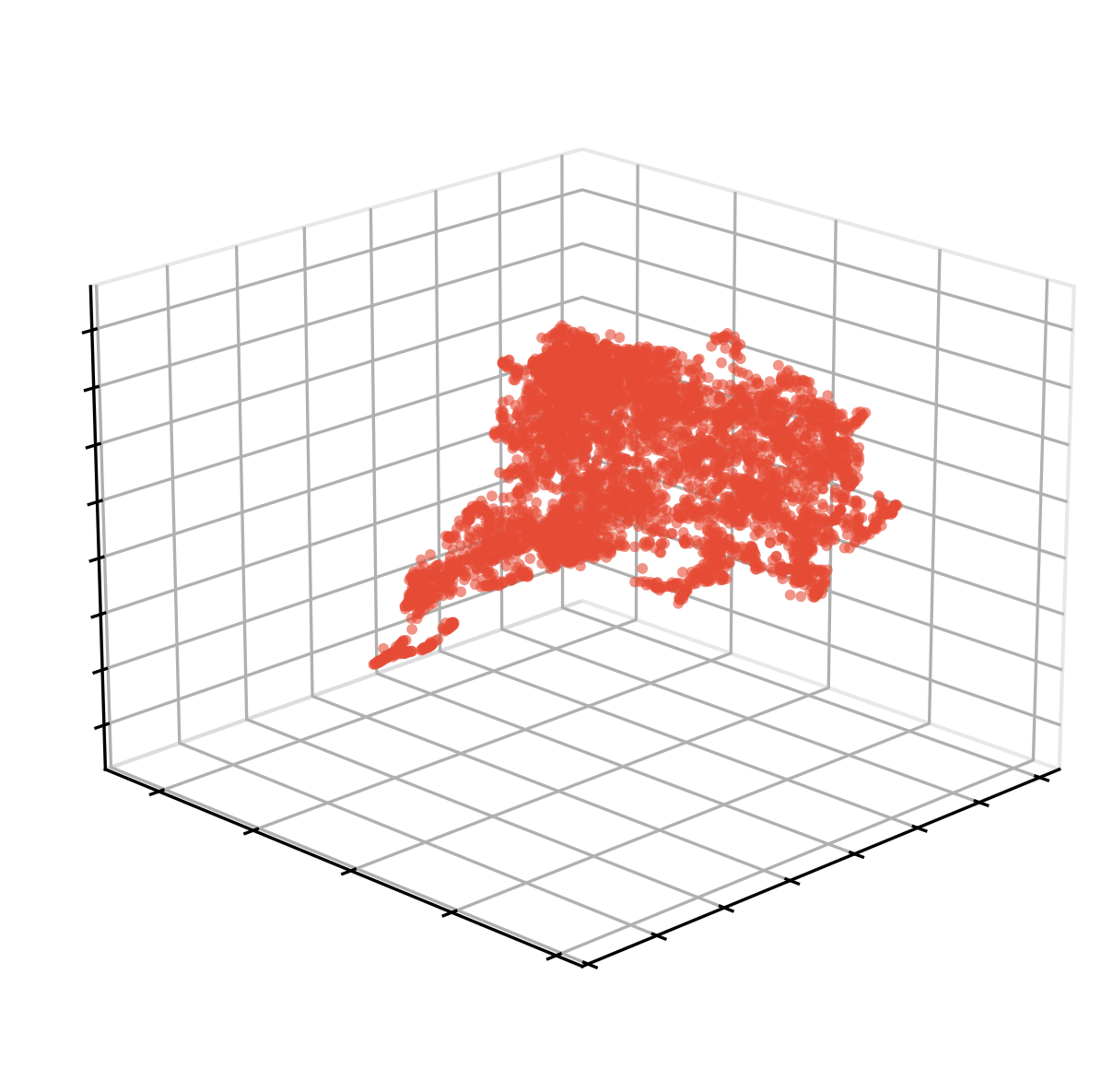}
        \vspace{-0.8cm}
        \caption{\tiny{Contextualized embeddings $\bm{z}_1$}}
        \label{fig:umap_e}
    \end{subfigure}
    \begin{subfigure}[b]{0.22\textwidth}
        \centering
        \includegraphics[width=\linewidth]{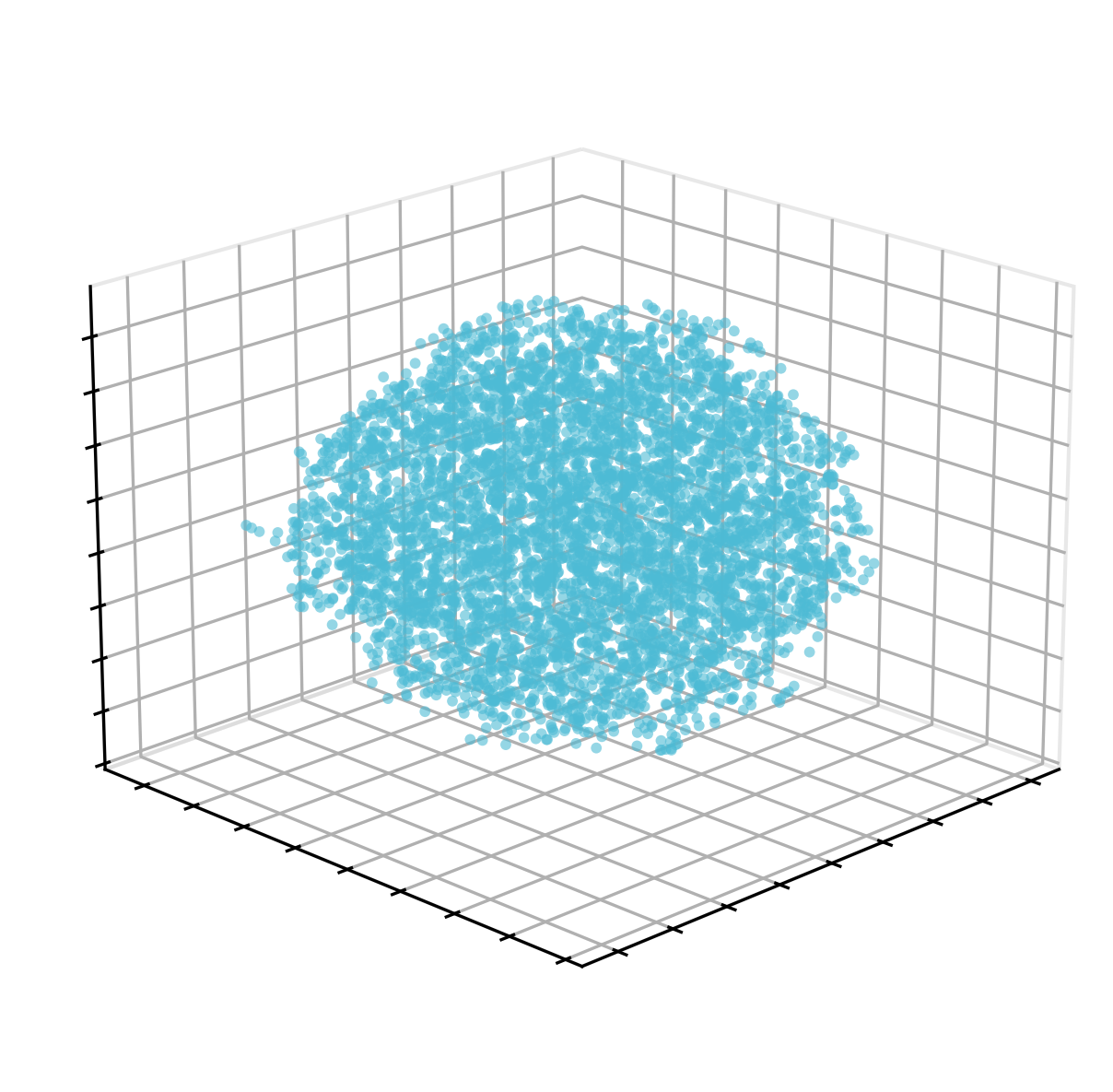}
        \vspace{-0.8cm}
        \caption{\tiny{Gaussian representations $\bm{z}_0$}}
        \label{fig:umap_z}
    \end{subfigure}
    \caption{\textbf{MeanFlow-based Representation Alignment.} We visualize the 3D UMAP projections of the contextualized embeddings $\bm{e}$ from the pretrained encoder, which exhibit complex structured manifolds, and the aligned representations $\bm{z}_0$ obtained via MeanFlow, which follow a standard Gaussian distribution while preserving semantic relations.}
    \label{fig:meanflow_umap}
\end{figure}

\begin{algorithm}[t]
    \caption{Two-stage training of RMDM}
    \label{alg:train}
    \begin{algorithmic}[1]
    \REQUIRE Dataset $\mathcal{D}$, pretrained encoder $g_\psi$, RMDM $p_\theta$, MeanFlow network $u_\phi$
    \STATE \textbf{Stage I: Train MeanFlow in representation space.}
    \FOR{each minibatch $\bm{x}_0 \sim \mathcal{D}$}
        \STATE $\bm{z}_1 \gets g_\psi(\bm{x}_0)$ \COMMENT{contextualized embeddings as target}
        \STATE Sample $\bm{z}_0 \sim \mathcal{N}(\bm{0}, \bm{I})$ and $(r,t)$ \COMMENT{noise as prior}
        \STATE $\bm{z}_t \gets (1-t)\bm{z}_0 + t\bm{z}_1$, \quad $v_t \gets \bm{z}_1 - \bm{z}_0$
        \STATE Compute $(u_\phi(\bm{z}_t,r,t), \frac{\mathrm{d}}{\mathrm{d} t}u_\phi(\bm{z}_t,r,t))$ via JVP
        \STATE Update $\phi$ by minimizing $\mathcal{L}(\phi)$ in Equation~\ref{eq:meanflow_loss}
    \ENDFOR
    \STATE Freeze $\phi$ (and freeze $g$).
    \STATE \textbf{Stage II: Train RMDM conditioned on representations.}
    \FOR{each minibatch $\bm{x}_0 \sim \mathcal{D}$}
        \STATE $\bm{e} \gets g_\psi(\bm{x}_0)$
        \STATE $\bm{z}_0 \gets \bm{e} - u_\phi(\bm{e}, 0, 1)$ \COMMENT{map embedding to noise space}
        \STATE Sample $t$ and construct a masked input $\bm{x}_t$ by the MDM corruption process
        \STATE Update $\theta$ by minimizing $\mathcal{L}(\theta)$ in Equation~\ref{eq:mdlm_cond_loss}
    \ENDFOR
    \end{algorithmic}
\end{algorithm}

In this subsection, we introduce the Representation-based Masked Diffusion Model (RMDM). The training process includes two stages. First, we obtain the representation $\bm{z}$ from data $\bm{x}$ through a pretrained encoder and transform its distribution to a standard Gaussian distribution by training a MeanFlow model. Then, we use this representation $\bm{z}$ as a condition to train the MDM.
To operationalize the latent variable $\bm{z}$ and bridge the conditional dependency gap discussed in Section~\ref{sec:method:gap}, we leverage the rich semantic representations provided by pretrained encoder models. Encoder-only architectures, ranging from the seminal BERT ~\citep{bert} to modern large-scale embedding models such as Qwen3 ~\citep{qwen3embedding}, map a discrete sequence $\bm{x}$ to a contextualized continuous embedding $\bm{z}_1 = g_{\psi}(x) \in \mathbb{R}^{L \times D_z}$, where $g_{\psi}$ is the encoder model. These contextualized embeddings capture high-level semantic structures and provide a smoother, more informative representation, making them ideal candidates for the global latent.

However, a direct utilization of this embedding introduces a fundamental discrepancy between training and inference: while $\bm{z}_1$ is readily computable from the ground-truth sequence $\bm{x}$ during training, it remains unobserved during inference. To resolve this, in stage I, we push the embedding distribution to a prior distribution which is easier to sample from while keeping the semantic structure of the embedding. In particular, we adopt the MeanFlow framework~\citep{shi2024meanflow} to learn a mapping between the data embedding $\bm{z}_1$ and a standard Gaussian prior $\bm{z}_0 \sim \mathcal{N}(\bm{0}, \bm{I})$. MeanFlow builds on the linear interpolation path $\bm{z}_t = (1-t)\bm{z}_0 + t\bm{z}_1$ like flow matching~\citep{lipman2022flow}, but replaces the instantaneous velocity field with an \textit{average velocity} field $u(\bm{z}_t, r, t)$ defined over an interval $[r, t]$:
\begin{align*}
    u(\bm{z}_t, r, t) \triangleq \frac{1}{t-r} \int_{r}^{t} v(\bm{z}_\tau, \tau) \mathrm{d} \tau.
\end{align*}
We parameterize $u_\phi(\bm{z}_t, r, t)$ with a neural network and train it using the MeanFlow identity, which relates average velocity and instantaneous velocity. Concretely, letting $v_t \triangleq \bm{z}_1 - \bm{z}_0$ denote the conditional velocity associated with the linear path, the MeanFlow objective regresses $u_\phi$ to the effective target induced by the identity:
\begin{align}
    \resizebox{.99\linewidth}{!}{$
    \mathcal{L}(\phi) \triangleq \mathbb{E} \left[ \left\lVert u_\phi(\bm{z}_t, r, t) - \text{sg}\left( v_t - (t-r) \frac{\mathrm{d}}{\mathrm{d} t} u_\phi(\bm{z}_t, r, t) \right) \right\rVert^2 \right]
    $}
\label{eq:meanflow_loss}
\end{align}
where $\text{sg}$ denotes the stop-gradient operator and the total derivative $\frac{\mathrm{d}}{\mathrm{d} t}$ is computed efficiently via Jacobian-vector products (as summarized in Section~\ref{sec:pre:continuous}).

Once trained, MeanFlow enables efficient {one-step transport}: given a data embedding $\bm{z}_1$, we recover its corresponding representation $\bm{z}_0$ in the Gaussian space by inverting the flow over the full interval $[0, 1]$:
\begin{align*}
    \bm{z}_0 = \bm{z}_1 - u_\phi(\bm{z}_1, 0, 1).
\end{align*}
Figure~\ref{fig:meanflow_umap} shows the UMAP~\citep{mcinnes2018umap} visualization of embeddings $\bm{z}_1 \triangleq \bm{e}$ and $\bm{z}_0$ obtained via MeanFlow, where $\bm{e}$ has its complex intrinsic geometric structure while the distribution of $\bm{z}_0$ is stable and easy to sample from.
We then use this Gaussian representation to condition the masked diffusion language model. Specifically, during Stage~II training we compute the data-dependent latent $\bm{z}_0 = g_{\psi}(\bm{x}_0) - u_\phi(g_{\psi}(\bm{x}_0), 0, 1)$ using the frozen pretrained encoder $g_{\psi}$ and the trained MeanFlow network $u_\phi$. We then train the RMDM $p_\theta(\bm{x}_0^i \mid \bm{x}_t, \bm{z}_0)$ to reconstruct masked tokens. Using the standard MDM training objective in Equation~\ref{eq:mdm_elbo}, we optimize:
{\small
\begin{equation}
    \mathcal{L}(\theta) \triangleq \mathbb{E}_{t,\bm{x}_0,\bm{x}_t,\bm{z}_0}\left[\frac{1}{t}\sum_{i\in \mathcal{I}_t} -\log p_\theta(\bm{x}_0^i \mid \bm{x}_t, \bm{z}_0)\right].
    \label{eq:mdlm_cond_loss}
\end{equation}
}
where $\mathcal{I}_t=\{i:\bm{x}_t^i=\texttt{[MASK]}\}$.
At inference time, since we have aligned the representation space with a Gaussian prior, we simply sample $\bm{z}$ directly from the prior $\mathcal{N}(\bm{0}, \bm{I})$, which yields a global representation used to guide parallel token generation. In addition, during the training in stage two and inference, the integration of $z_0$ will not lead to additional huge computation costs.


\begin{algorithm}[t]
\caption{Sampling with RMDM}
\label{alg:sample}
\begin{algorithmic}[1]
\REQUIRE RMDM $p_\theta$, number of denoising steps $K$
\STATE Sample $\bm{z} \sim \mathcal{N}(\bm{0}, \bm{I})$
\STATE Initialize $\bm{x}^{(0)}$ as all \texttt{[MASK]}
\FOR{$k = 0,1,\dots,K-1$}
    \STATE Select a subset of masked positions to update (parallel schedule)
    \STATE Sample tokens in parallel: $\bm{x}^{(k+1)} \sim p_\theta(\cdot \mid \bm{x}^{(k)}, \bm{z})$ on selected positions
\ENDFOR
\STATE \textbf{return} $\bm{x}^{(K)}$
\end{algorithmic}
\end{algorithm}

\subsection{Theoretical Analysis: Decomposing the Generation Gap}
\label{sec:method:theory}

In this section, we provide a rigorous decomposition of the error associated with parallel decoding in RMDM. 


\begin{proposition}[Tri-part Decomposition of the Joint KL]
\label{prop:decomposition}
Let $p_{\phi}(\bm{z})$ be the distribution of latents derived from the MeanFlow transport (approximating the posterior), and define the induced joint
\begin{equation*}
    \resizebox{.98\linewidth}{!}{$
    p_{\text{data}}(\bm{x}_M,\bm{x}_U,\bm{z})
    \triangleq
    p_{\phi}(\bm{z})\,p_{\text{data}}(\bm{x}_U \mid \bm{z})\,
    p_{\text{data}}(\bm{x}_M \mid \bm{x}_U,\bm{z}),
    $}
\end{equation*}
RMDM generative samples factorized as
\begin{equation*}
    \resizebox{.98\linewidth}{!}{$
    p_\theta(\bm{x}_M,\bm{x}_U,\bm{z})
    =
    p(\bm{z})\,\prod_{i \in U} p_\theta(x_i \mid \bm{z})\,
    \prod_{i \in M} p_\theta(x_i \mid \bm{x}_U, \bm{z}).
    $}
\end{equation*}
where $p(z)$ is the prior distribution.
Then the joint divergence between the data distribution and the model admits the following tri-part decomposition. Throughout the statement, a conditional KL such as
$D_{\mathrm{KL}}(p(y\mid x)\|q(y\mid x))$ denotes the averaged quantity
$\mathbb{E}_{p(x)}D_{\mathrm{KL}}(p(y\mid x)\|q(y\mid x))$; total-correlation terms are averaged over their conditioning variables in the same way:
\begin{align}
    &D_{\mathrm{KL}}\!\left(
    p_{\text{data}}(\bm{x}_M,\bm{x}_U,\bm{z})
    \,\middle\|\,
    p_\theta(\bm{x}_M,\bm{x}_U,\bm{z})
    \right) \nonumber \\
    =\;
    &D_{\mathrm{KL}}\!\left(
    p_{\phi}(\bm{z})
    \,\middle\|\,
    p(\bm{z})
    \right)
    \nonumber \\
    +&
    \mathcal{T}(\bm{x}_U \mid \bm{z})
    +
    \mathcal{T}(\bm{x}_M \mid \bm{x}_U,\bm{z})
    \nonumber \\
    +&
    \sum_{i\in U}
    D_{\mathrm{KL}}\!\left(
    p_{\text{data}}(x_i \mid \bm{z})
    \,\middle\|\,
    p_\theta(x_i \mid \bm{z})
    \right)
    \nonumber \\
    +&
    \sum_{i\in M}
    D_{\mathrm{KL}}\!\left(
    p_{\text{data}}(x_i \mid \bm{x}_U,\bm{z})
    \,\middle\|\,
    p_\theta(x_i \mid \bm{x}_U,\bm{z})
    \right).
    \label{eq:joint_kl_decomposition}
\end{align}
The four lines after the equality correspond to (I) MeanFlow--prior mismatch,
(II) remaining dependency, and the two parts of (III) RMDM prediction error.
\end{proposition}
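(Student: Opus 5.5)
The plan is to use the chain rule for KL divergence together with one algebraic identity. For any joint $p$ and any product $\prod_i q_i$, the mismatch splits into a total-correlation term plus marginal KLs.

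\textbf{Step 1 (chain rule).} Both joints factor in the same order: $\bm{z}$, then $\bm{x}_U\mid\bm{z}$, then $\bm{x}_M\mid\bm{x}_U,\bm{z}$. So I write the log-ratio $\log p_{\text{data}}/p_\theta$ as the sum of three log-ratios and take expectation under $p_{\text{data}}(\bm{x}_M,\bm{x}_U,\bm{z})$. This gives
\begin{align*}
D_{\mathrm{KL}}(p_{\phi}(\bm{z})\|p(\bm{z}))
&+ \mathbb{E}_{p_\phi(\bm{z})} D_{\mathrm{KL}}\Big(p_{\text{data}}(\bm{x}_U\mid\bm{z})\,\Big\|\,\textstyle\prod_{i\in U}p_\theta(x_i\mid\bm{z})\Big)\\
&+ \mathbb{E}_{p_{\text{data}}(\bm{x}_U,\bm{z})} D_{\mathrm{KL}}\Big(p_{\text{data}}(\bm{x}_M\mid\bm{x}_U,\bm{z})\,\Big\|\,\textstyle\prod_{i\in M}p_\theta(x_i\mid\bm{x}_U,\bm{z})\Big).
\end{align*}
The first term is exactly (I), since the $\bm{z}$-marginal of the induced joint is $p_\phi$ by construction.

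\textbf{Step 2 (splitting against a product).} For a generic conditioning variable $c$ and block $\bm{y}=(y_i)_i$, I insert $\prod_i p_{\text{data}}(y_i\mid c)$ inside the logarithm:
\[
\log\frac{p(\bm{y}\mid c)}{\prod_i q(y_i\mid c)}=\log\frac{p(\bm{y}\mid c)}{\prod_i p(y_i\mid c)}+\sum_i\log\frac{p(y_i\mid c)}{q(y_i\mid c)}.
\]
Taking expectation under $p(\bm{y}\mid c)$, the first piece is $\mathcal{T}(\bm{y}\mid c)$. In the second piece each summand depends only on $y_i$, so its expectation reduces to one under the marginal $p(y_i\mid c)$. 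This gives $\sum_i D_{\mathrm{KL}}(p(y_i\mid c)\|q(y_i\mid c))$.

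\textbf{Step 3 (apply Step 2 twice).} I apply Step 2 with $c=\bm{z}$ and $\bm{y}=\bm{x}_U$, and again with $c=(\bm{x}_U,\bm{z})$ and $\bm{y}=\bm{x}_M$. Averaging over $c$, as the statement's convention prescribes, yields (II) together with both sums in (III).

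\textbf{Main obstacle.} The delicate point is Step 2's marginalization. I must check that the marginal of $p_{\text{data}}(\bm{x}_M\mid\bm{x}_U,\bm{z})$ at coordinate $i$ is exactly the $p_{\text{data}}(x_i\mid\bm{x}_U,\bm{z})$ appearing in the statement. I must also check that the averaging weights, namely $p_\phi(\bm{z})$ and the induced $p_{\text{data}}(\bm{x}_U,\bm{z})$, match those used to define $\mathcal{T}$ and the conditional KLs.

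For this I will interpret every $p_{\text{data}}$ conditional in the statement as the corresponding conditional of the induced joint defined in the proposition, rather than of a different posterior. With that reading, the identity holds exactly, with no inequality. Finiteness of each term (absolute continuity) is assumed, so the sums can be split without $\infty-\infty$ issues.
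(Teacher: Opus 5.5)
Your proposal is correct and follows the paper's proof essentially step for step. It uses the same chain-rule split of the joint log-ratio into three terms, then the same add-and-subtract of $\sum_{i}\log p_{\text{data}}(x_i\mid C)$ applied once with $(U,\bm{z})$ and once with $(M,(\bm{x}_U,\bm{z}))$; your caveats about reading each $p_{\text{data}}$ conditional as one of the induced joint and assuming finiteness are sensible points the paper leaves implicit.
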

The proof is provided in Appendix~\ref{app:proof_prop_decomposition}.
\paragraph{Derivation and Analysis.}
We analyze each term in the decomposition to demonstrate how our framework systematically addresses the challenges of parallel generation in masked diffusion models.

\textbf{(I) MeanFlow--Prior Mismatch.}
The first term $D_{\mathrm{KL}}(p_{\phi}(\bm{z}) \| p(\bm{z}))$ quantifies the discrepancy between the aggregated posterior $p_{\phi}(\bm{z})$ (induced by MeanFlow transport from encoder embeddings) and the prior $p(\bm{z}) = \mathcal{N}(\bm{0}, \bm{I})$ used during generation. This mismatch is critical because the decomposition in Proposition~\ref{prop:decomposition} assumes that the ground truth statistical modeling is based on the posterior $p_{\phi}(\bm{z})$, while inference samples from $p_\theta(\bm{z})$. MeanFlow implicitly minimizes this term by learning a transport map that aligns the encoder-derived latent distribution with the Gaussian prior at the population level. Unlike standard variational autoencoders that enforce per-sample posterior constraints, MeanFlow's population-level alignment ensures that latents sampled from the prior during inference closely match the distribution of latents seen during training, thereby preserving the validity of the decomposition.

\textbf{(II) Remaining Dependency.}
The second term $\mathcal{T}(\bm{x}_U \mid \bm{z}) + \mathcal{T}(\bm{x}_M \mid \bm{x}_U, \bm{z})$ captures the conditional dependencies among tokens that remain after conditioning on the latent $\bm{z}$. These terms measure the irreducible error of the factorization model: even with perfect token-wise modeling, dependencies not explained by $\bm{z}$ will cause the factorized approximation to deviate from the true joint distribution. By leveraging a powerful pretrained encoder like Qwen3~\citep{qwen3embedding} to extract rich global semantics $\bm{z}$, we minimize these dependency terms (Proposition~\ref{prop:dependency_reduction}), making the factorized assumption $p(\bm{x}_M \mid \bm{x}_U, \bm{z}) = \prod_{i \in M} p(x_i \mid \bm{x}_U, \bm{z})$ approximately valid. The quality of this term is thus determined by the representational capacity of the pretrained encoder $g_{\psi}$.

\textbf{(III) RMDM Prediction Error.}
The third term $\sum_{i \in U \cup M} D_{\mathrm{KL}}(p_{\text{data}}(x_i \mid \cdot) \| p_\theta(x_i \mid \cdot))$ measures the token-wise prediction error of $p_\theta$ given the latent $\bm{z}$. This term corresponds directly to the RMDM training objective Equation~\ref{eq:mdlm_cond_loss}, which optimizes the model to reconstruct individual tokens conditioned on the masked context $\bm{x}_U$ and the global latent $\bm{z}$. By training a powerful diffusion model $p_\theta$ to minimize the ELBO, we implicitly reduce the prediction error. The quality of this term depends on the capacity and training of the RMDM $p_\theta$.
\paragraph{Summary.}
Proposition~\ref{prop:decomposition} provides a unified theoretical justification for our two-stage framework:
\begin{itemize}[leftmargin=*, noitemsep, topsep=0pt]
    \item We minimize \textbf{(I)} by aligning the latent space via MeanFlow transport, ensuring inference-time latents match training-time distributions.
    \item We minimize \textbf{(II)} by extracting rich global semantics $\bm{z}$ from a powerful pretrained encoder, reducing residual dependencies among tokens.
    \item We minimize \textbf{(III)} by training a powerful masked diffusion model $p_\theta$ to accurately model token distributions conditioned on $\bm{z}$.
\end{itemize}

\subsection{Model Architecture}
\label{sec:method:architecture}

\begin{table}[t]
    \centering
    \setlength{\tabcolsep}{3.5pt}
    \begin{tabular}{clcc}
    \toprule
      & Model & Iterations & PPL ($\downarrow$)\\
    \midrule
      \multirow{2}{*}{\rot{\emph{AR}}}
          & Transformer-X Base  & - & 23.5 \\
          & $\text{OmniNet}_T$  & - & 21.5 \\
    \midrule
      \multirow{6}{*}{\rot{\emph{Diffusion}}}
          & D3PM (absorb)  & 1M & $\leq$76.90 \\
          & Diffusion-LM  & 1M & $\leq$118.62 \\
          & DiffusionBert  & 1M & $\leq$63.78 \\
          & SEDD  & 1M & $\leq$ 32.79 \\
          & MDLM  & 10M & $\leq$23.00 \\
          & MDLM$^\dagger$\  & 1M & $\leq$27.60 \\
        & \textbf{RMDM} (ours) & 1M & $\leq$\textbf{17.00} \\
    \bottomrule
    \end{tabular}
    \caption{Test perplexities (PPL $\downarrow$) on LM1B. $\dagger$ denotes our retrained models, other results are reported in ~\citet{he2022diffusionbert} and ~\citet{sahoo2024mdlm}.
    Best value is bolded. 
    }
    \label{tab:lm1b-ppl}
\end{table}

\begin{table*}[!t]
    \centering
    \setlength{\tabcolsep}{3.2pt}
    \renewcommand{\arraystretch}{1.08}
    \begin{tabular*}{0.92\textwidth}{@{\extracolsep{\fill}}llrrrrrrr@{}}
        \toprule
        Model & Metric & 1024 & 512 & 256 & 128 & 64 & 32 & 16 \\
        \midrule
        \multirow{2}{*}{MDLM}
            & GenPPL$\downarrow$ & 49.68 & 59.71 & 72.85 & 84.89 & 105.33 & 138.19 & 213.44 \\
            & Judge$\uparrow$ & 2.36 & 2.29 & 2.12 & 2.06 & 2.04 & 1.95 & 1.73 \\
        \midrule
        \multirow{2}{*}{SEDD}
            & GenPPL$\downarrow$ & 52.00 & 60.79 & 75.52 & 88.77 & 108.88 & 142.31 & 226.39 \\
            & Judge$\uparrow$ & 2.29 & 2.23 & 2.17 & 2.07 & 1.93 & 1.93 & 1.72 \\
        \midrule
        \multirow{2}{*}{\textbf{RMDM}}
            & GenPPL$\downarrow$ & \textbf{34.58} & \textbf{40.48} & \textbf{45.08} & \textbf{52.06} & \textbf{61.54} & \textbf{85.84} & \textbf{126.58} \\
            & Judge$\uparrow$ & \textbf{2.41} & \textbf{2.44} & \textbf{2.34} & \textbf{2.29} & \textbf{2.17} & \textbf{2.12} & \textbf{1.87} \\
        \bottomrule
    \end{tabular*}
    \caption{Generation quality on OpenWebText. GenPPL is evaluated by GPT-2; Judge is the average LLM-judge overall-quality score.}
    \label{tab:owt_genppl_comparison}
\end{table*}

\paragraph{Backbone and Conditioning Mechanism.}
We employ a Diffusion Transformer (DiT) architecture~\citep{vaswani2017attention,peebles2023scalable} for both the MeanFlow network $u_\phi$ and the RMDM $p_\theta$. The model operates on a sequence of hidden states $\bm{h} \in \mathbb{R}^{L \times D}$, derived from discrete token inputs $\bm{x} \in \{1, \dots, V\}^{L}$ via a learned embedding table. To incorporate the global semantic context, we inject the latent variable $\bm{z} \in \mathbb{R}^{L \times D_z}$ into every transformer block using Adaptive Layer Normalization (AdaLN) ~\citep{xu2019understanding, peebles2023scalable}. Specifically, the layer input $\bm{h}$ is modulated as:
\begin{equation*}
    \resizebox{.98\linewidth}{!}{$
    f_{\text{AdaLN}}(\bm{h}; \bm{z}) \triangleq
    f\!\left(\bm{\gamma}(\bm{z}) \odot \mathrm{LN}(\bm{h}) + \bm{\beta}(\bm{z})\right)
    \odot \alpha(\bm{z}) + \bm{h}.
    $}
    \label{eq:adaln}
\end{equation*}
where $\bm{\gamma}(\cdot),\bm{\beta}(\cdot),\alpha(\cdot)$ are parameters predicted from $\bm{z}$ by a shallow MLP, $f(\cdot)$ is a multiplication-like operation that can be implemented as an attention or MLP. Unlike standard diffusion models that primarily condition on the timestep $t$, we ensure the denoising process is consistently guided by the global semantic representation $\bm{z}$ across all layers.

\paragraph{Latent Stabilization.}
\label{sec:method:stabilization}
Conditioning on high-dimensional latents from large pretrained encoders presents a challenge: the model may overfit to the training latents, leading to poor generalization on samples drawn from the prior $\mathcal{N}(\bm{0}, \bm{I})$ during inference. We mitigate this via a two-step stabilization strategy.
First, we project $\bm{z}$ channel-wise to a lower-dimensional space $\mathbb{R}^{L \times d_z}$ (where $d_z < D_z$) via an orthogonal linear map and then upsample it to the original channel dimension. This bottleneck keeps the model from focusing on robust semantic features and reduces the complexity of the conditioning signal, while the simplicity of the prior is well preserved.
Second, during training, we inject isotropic Gaussian noise $\bm{\epsilon} \sim \mathcal{N}(\bm{0}, \sigma^2 \bm{I})$ into the projected latent, which acts as a smoothing regularizer and encourages the model to be robust to local perturbations in the latent space. Empirically, these strategies improve the generalizability.

\section{Experiments}

\begin{table}[t]
        \centering
        \begin{tabular*}{0.9\linewidth}{@{\extracolsep{\fill}}lccc}
        \toprule
         & AR & MDLM & RMDM \\
        \midrule
        WikiText & \textbf{25.75} & 46.78 & \underline{36.70} \\
        LM1B & \underline{51.25} & 77.91 & \textbf{48.57} \\
        LAMBADA & \textbf{51.28} & 116.30 & \underline{70.71} \\
        AG News & \textbf{52.09} & 96.20 & \underline{61.24} \\
        Pubmed & \textbf{49.01} & 71.65 & \underline{51.45} \\
        Arxiv & \textbf{41.73} & 58.82 & \underline{45.51} \\
        \bottomrule
        \end{tabular*}
        \caption{Zero-shot perplexities ($\downarrow$) of models trained on OpenWebText (OWT).
        All perplexities for diffusion models are upper bounds.
        Best is in bold and the second best is underlined.
        }
        \label{zeroshot-ppl}
    \end{table}

\subsection{Experimental Setup}

\paragraph{Tasks and Datasets.}
We study unconditional text generation following the standard evaluation protocol for diffusion language models ~\citep{lou2024discrete, sahoo2024mdlm}.
We evaluate on One Billion Word (LM1B)~\citep{chelba2013lm1b} and OpenWebText (OWT)~\citep{Gokaslan2019OpenWeb}. For LM1B, we report test perplexity derived from Equation~\ref{eq:mdlm_cond_loss}. For OWT, we report zero-shot perplexity on WikiText~\citep{merity2017pointer}, LM1B, LAMBADA~\citep{paperno-etal-2016-lambada}, AG News~\citep{zhang2015character}, and Scientific Papers~\citep{Cohan_2018}. For generation quality, we evaluate 256 generated samples with GPT-2 GenPPL~\citep{gpt2}. Since GenPPL is an evaluator-perplexity proxy and may not fully reflect human-perceived generation quality~\citep{shi2024simplified}, we additionally evaluate overall quality with two independent LLM judges. Implementation details, including tokenization, latent encoders, model sizes, and optimization settings, are provided in Appendix~\ref{app:experimental_details}; training and inference costs are reported in Appendices~\ref{app:training_cost} and~\ref{app:inference_efficiency}; and the LLM-judge protocol and confidence intervals are provided in Appendix~\ref{app:llm_judge}.
RMDM and our retrained baselines use the same tokenizer, DiT backbone, data preprocessing, and optimization schedule. RMDM additionally uses representations from a pretrained encoder during training; the baselines do not receive equivalent external representation knowledge.

\subsection{Main Results}

\paragraph{GenPPL improvements are the strongest when steps are scarce.}
On OpenWebText, Table~\ref{tab:owt_genppl_comparison} shows that RMDM consistently improves GenPPL over MDM-style baselines like MDLM~\citep{sahoo2024mdlm} and SEDD~\citep{lou2024discrete}, and the gap widens as the sampling budget decreases. With only a few parallel refinement steps, masked diffusion often loses global coherence due to limited shared context; in contrast, RMDM injects a sequence-level representation at each denoising step to coordinate simultaneous token updates.
RMDM also obtains the strongest primary-judge scores across all sampling budgets, and an independent second judge yields the same overall ranking trend. This agreement suggests that the GenPPL gains are accompanied by improvements in the qualities targeted by the evaluation rubric rather than being specific to one evaluator.

\begin{table}[t]
    \centering
    \setlength{\tabcolsep}{4.0pt}
    \renewcommand{\arraystretch}{1.05}
    \begin{tabular}{@{}rrrr@{}}
        \toprule
        Target GenPPL & MDLM & RMDM & Gain \\
        \midrule
        $\leq 50$  & 80 tok/s   & 287 tok/s  & $3.6\times$ \\
        $\leq 75$  & 318 tok/s  & 1142 tok/s & $3.6\times$ \\
        $\leq 130$ & 1269 tok/s & 4442 tok/s & $3.5\times$ \\
        \bottomrule
    \end{tabular}
    \caption{Matched-quality throughput on one A800 GPU (batch size 8, $L=1024$).}
    \label{tab:iso_quality_efficiency}
\end{table}

\paragraph{Inference efficiency.}
Table~\ref{tab:iso_quality_efficiency} reports measured throughput at representative matched-quality operating points. Although latent conditioning makes RMDM about 10\% slower than MDLM at the same number of steps, RMDM reaches the same GenPPL with substantially fewer steps, yielding approximately $3.6\times$ higher throughput. Peak sampling memory increases only from 18.64 GB to 18.93 GB. The pretrained encoder and MeanFlow network are not run at inference; generation draws the latent once from the prior. Full measurements are reported in Appendix~\ref{app:inference_efficiency}.

\paragraph{Validation and zero-shot perplexity.}
Beyond sample quality, RMDM does not degrade likelihood-based evaluation on LM1B (Table~\ref{tab:lm1b-ppl}). We speculate that its strong LM1B performance even against AR baselines is partly due to the large fraction of \texttt{[PAD]} tokens in LM1B sequences: the sequence-level representation can identify padding structure and provide a shared global signal during denoising. After scaling on OpenWebText, it also improves zero-shot perplexity over diffusion baselines across diverse test sets (Table~\ref{zeroshot-ppl}), suggesting that the learned representation distribution generalizes beyond training data.


\subsection{Ablation Study}
\label{sec:ablation}

\begin{table}[t]
    \centering
    \setlength{\tabcolsep}{3.6pt}
    \renewcommand{\arraystretch}{1.05}
    \begin{tabular*}{0.92\linewidth}{@{\extracolsep{\fill}}lrrrr@{}}
        \toprule
        & \multicolumn{2}{c}{GenPPL $\downarrow$} & \multicolumn{2}{c}{Judge $\uparrow$} \\
        \cmidrule(lr){2-3}\cmidrule(l){4-5}
        Step & RMDM & Indep. $z$ & RMDM & Indep. $z$ \\
        \midrule
        1024 & \textbf{34.58} & 50.45 & \textbf{2.41} & 2.25 \\
        512  & \textbf{40.48} & 59.15 & \textbf{2.44} & 2.23 \\
        256  & \textbf{45.08} & 70.69 & \textbf{2.34} & 2.20 \\
        128  & \textbf{52.06} & 81.41 & \textbf{2.29} & 2.09 \\
        64   & \textbf{61.54} & 98.83 & \textbf{2.17} & 2.02 \\
        32   & \textbf{85.84} & 126.67 & \textbf{2.12} & 1.93 \\
        16   & \textbf{126.58} & 176.86 & \textbf{1.87} & 1.80 \\
        \bottomrule
    \end{tabular*}
    \caption{Ablation study on OpenWebText. Indep. $z$ denotes a variant trained with Gaussian conditioning latents sampled independently of the target sequence $x$.}
    \label{tab:ablation_z}
\end{table}

\paragraph{Source of the continuous latent $z$.}
Table~\ref{tab:ablation_z} isolates whether the gain comes from the data-dependent representation or merely from adding a Gaussian conditioning variable. We replace the encoded latent with an independent sample $z\sim\mathcal{N}(0,I)$ during the training process, while keeping RMDM interface, conditioning mechanism, and sampling schedule unchanged. This control has a similar marginal distribution to the aligned latent prior, but removes the sequence-level information that RMDM is designed to provide. It is consistently worse than RMDM across all sampling budgets, with particularly clear degradation under few-step sampling. This pattern shows that the extra conditioning pathway alone is not sufficient: a random Gaussian vector may match the prior distribution, but it cannot explain residual dependencies among simultaneously masked tokens. The useful signal is instead the data-dependent representation carried by $z$, which acts as a global plan and gives masked positions shared information for coordinating parallel token updates.


\section{Conclusion}

In this paper, we present {Representation-based Masked Diffusion Model} (RMDM), which extends masked diffusion language models with a global continuous latent variable to explicitly model sequence-level semantics and restore inter-token dependencies. This advantage of RMDM makes it possible to correctly generate multiple masked tokens simultaneously and is especially powerful with a small number of sampling steps, where standard parallel sampling methods are most prone to losing global coherence. The MeanFlow-based representation learning mechanism aligns pretrained encoder embeddings with a Gaussian prior, allowing the latent to be sampled once at inference without running the encoder or MeanFlow network. By conditioning the masked diffusion model on these latent representations, we couple the updating of all masked tokens through shared global semantic information, reducing the Conditional Dependency Gap in parallel generation. Experiments on LM1B and OpenWebText demonstrate that RMDM consistently improves generation quality, especially in aggressive few-step regimes, and achieves approximately $3.6\times$ higher measured throughput at matched quality in our A800 evaluation. Our approach demonstrates the importance of global semantic coordination in non-autoregressive generation and provides a principled framework for addressing the conditional dependency gap.


\section*{Limitations}

The experiments mainly focus on unconditional generation at model scales and computational budgets comparable to GPT-2-level. While the results provide evidence for the effectiveness of the approach in this setting, its transferability to substantially larger-scale models and conditional generation tasks remains to be examined. We leave these extensions for future work.

\section*{Ethics Statement}

This work focuses on the statistical modeling problem of masked diffusion language models, specifically the conditional independence assumption used in parallel masked-token updates. It does not involve human subjects, new data collection, or additional model training beyond the experimental setup described in the paper. All experiments are conducted on publicly available benchmarks. The proposed method does not introduce new ethical risks beyond those inherent to the underlying language models and the benchmark data on which they are evaluated.

\section*{Acknowledgments}

The authors acknowledge support from The Hong Kong Polytechnic University (Research Grant P0046811).

\bibliography{rmdm_emnlp2026}

\clearpage
\onecolumn
\appendix
\section{Related Works}
\label{app:related}

\paragraph{Discrete Diffusion Language Models.}
Discrete diffusion models generate text by reversing a corruption process defined directly on the discrete token space. Early research generally falls into two paradigms: transition-based frameworks and score-based methods. D3PM~\citep{austin2021structured} pioneered the adaptation of continuous diffusion concepts to discrete spaces via transition matrices, which was later extended to continuous time using Markov chains~\citep{campbell2022continuous}. Alternatively, inspired by score matching~\citep{song2020score}, several works proposed discrete counterparts to the Stein score for modeling data distributions~\citep{meng2022concrete, lou2024discrete}.
A dominant subclass within this domain is {Masked Diffusion Models (MDMs)}, which treat the corruption process as token masking, starting from a masked sequence and refining tokens simultaneously using bidirectional context. Subsequent studies~\citep{ou2025radd, sahoo2024mdlm, shi2024simplified} demonstrated that simplified masking mechanisms can significantly enhance performance, effectively bridging the gap between diffusion and autoregressive models. The iterative unmasking process inherent in MDMs supports sophisticated reasoning capabilities, such as iterative refinement~\citep{ired} and reverse-order reasoning~\citep{llada}. The framework has also been integrated with chain-of-thought reasoning~\citep{dot-sedd}, demonstrating strong performance in tasks requiring parallel context and systematic refinement. Similar algorithms are proposed from the flow matching perspective~\citep{discrete_flow_matching}. In addition to masking noise, some works attempt to leverage uniform noise, though these tend to yield inferior performance~\citep{gidd, ko_discrete_flow}. Recently, MDMs have been scaled to large language models; for instance, LLaDA~\citep{llada} scales up to 8 billion parameters, showcasing reasoning capabilities previously unseen in non-autoregressive models. Furthermore, Dream~\citep{dream7b} introduces a training paradigm that initializes diffusion models with pretrained autoregressive weights, combining the strengths of both approaches.

\paragraph{Continuous Diffusion Language Models.}
Continuous diffusion models (CDMs) reformulate text generation by performing diffusion in a continuous space, typically on word embeddings or logits. {Embedding-based Diffusion Models (EDMs)}, such as Diffusion-LM~\citep{diffusionlm}, apply Gaussian diffusion to the continuous embeddings of discrete tokens. This formulation naturally supports controllable generation and sequence-to-sequence tasks~\citep{categorical, tess, diffuseq}. Although early EDMs faced performance gaps compared to autoregressive models, Plaid~\citep{plaid} established empirical scaling laws that significantly narrowed the efficiency gap with autoregressive models. This framework was further extended by DoT-Plaid~\citep{dot-sedd} to support chain-of-thought reasoning by leveraging iterative latent refinement. Apart from embedding-based methods, some approaches operate on the logit space~\citep{ssdlm, duo} or explore multimodal integration~\citep{diffuseeverything, multiflow}. DUO~\citep{duo} attempts to connect two types of diffusion models via marginal matching and applies distillation tricks for continuous diffusion to discrete text diffusion. While CDMs benefit from the well-established theory of continuous diffusion, mapping the continuous latents back to discrete text remains a non-trivial challenge that often requires specific regularization or rounding strategies.

\paragraph{Latent-Augmented Masked Diffusion.}
To enhance the expressivity of discrete diffusion, a growing body of work investigates integrating continuous latent variables or auxiliary processes into the discrete generation framework. Several works have explored utilizing latent variable models to improve text modeling~\citep{bowman2015generating, kaiser2018fast}. Recently, \citet{kong2025scalable} used a latent variable structure for next-token prediction in autoregressive models, optimized with variational Bayes. In the context of diffusion, \citet{hayakawa2024distillation} considered distilling pretrained MDMs with latent variables as the backward transition by optimizing consistency loss.
A complementary direction introduces continuous variables to capture joint structure along the diffusion path. VADD~\citep{xie2025variationalautoencodingdiscretediffusion} associates a continuous latent with each reverse transition and trains a VAE-style model to obtain non-factorized posteriors. Concurrent to our work, CCDD~\citep{zhou2025coevolutionarycontinuousdiscretediffusion} jointly evolves continuous representations and discrete tokens during diffusion. RMDM instead samples an aligned latent once and keeps it fixed while denoising the discrete sequence. Thus, both methods use continuous representations to augment discrete diffusion, but assign them different roles in the sampling process.

\section{Theoretical Justifications}

\subsection{Proof of Proposition~\ref{prop:dependency_reduction}}
\label{app:proof_prop1}

For a set of indices $M$, write $\bm{x}_M=\{x_i:i\in M\}$.  For any
conditioning variable $C$, define the conditional dependency gap
\begin{equation}
    \mathcal{T}(\bm{x}_M \mid C)
    =
    D_{\mathrm{KL}}\!\left(
    p(\bm{x}_M \mid C)
    \,\middle\|\,
    \prod_{i\in M} p(x_i \mid C)
    \right),
\end{equation}
where the conditional KL is averaged over $C$ when $C$ is random:
\begin{equation}
    D_{\mathrm{KL}}\!\left(p(Y\mid C)\,\middle\|\,q(Y\mid C)\right)
    \triangleq
    \mathbb{E}_{p(C)}
    \mathbb{E}_{p(Y\mid C)}
    \left[
    \log \frac{p(Y\mid C)}{q(Y\mid C)}
    \right].
    \label{eq:conditional_kl_expectation_definition}
\end{equation}
Equivalently,
\begin{equation}
    \mathcal{T}(\bm{x}_M \mid C)
    =
    \sum_{i\in M} H(x_i \mid C) - H(\bm{x}_M \mid C).
    \label{eq:tc_entropy_identity}
\end{equation}

\paragraph{Restatement of Proposition~\ref{prop:dependency_reduction}.}
The residual gap $\mathcal{T}(\bm{x}_M \mid \bm{x}_U,\bm{z})$ is zero if and only if
\begin{equation}
    p(\bm{x}_M \mid \bm{x}_U,\bm{z})
    =
    \prod_{i\in M} p(x_i \mid \bm{x}_U,\bm{z})
    \quad \text{a.s.}
    \label{eq:conditional_independence_factorization}
\end{equation}
Moreover,
\begin{equation}
    \Delta \mathcal{T}
    \triangleq
    \mathcal{T}(\bm{x}_M \mid \bm{x}_U)
    -
    \mathcal{T}(\bm{x}_M \mid \bm{x}_U,\bm{z})
    =
    \sum_{i\in M} I(x_i;\bm{z}\mid \bm{x}_U)
    -
    I(\bm{x}_M;\bm{z}\mid \bm{x}_U).
    \label{eq:dependency_gap_reduction}
\end{equation}

\begin{proof}
The first claim follows directly from the non-negativity of KL divergence:
$\mathcal{T}(\bm{x}_M \mid \bm{x}_U,\bm{z})=0$ holds exactly when the two
conditional distributions in its definition are equal almost surely, which is
the factorization in~\eqref{eq:conditional_independence_factorization}.

For the second claim, apply~\eqref{eq:tc_entropy_identity} twice:
\begin{align}
    \Delta \mathcal{T}
    &=
    \left[
        \sum_{i\in M} H(x_i \mid \bm{x}_U)
        -
        H(\bm{x}_M \mid \bm{x}_U)
    \right]
    -
    \left[
        \sum_{i\in M} H(x_i \mid \bm{x}_U,\bm{z})
        -
        H(\bm{x}_M \mid \bm{x}_U,\bm{z})
    \right] \nonumber \\
    &=
    \sum_{i\in M}
    \left[
        H(x_i \mid \bm{x}_U)
        -
        H(x_i \mid \bm{x}_U,\bm{z})
    \right]
    -
    \left[
        H(\bm{x}_M \mid \bm{x}_U)
        -
        H(\bm{x}_M \mid \bm{x}_U,\bm{z})
    \right] \nonumber \\
    &=
    \sum_{i\in M} I(x_i;\bm{z}\mid \bm{x}_U)
    -
    I(\bm{x}_M;\bm{z}\mid \bm{x}_U).
\end{align}
\end{proof}

\subsection{Proof of Proposition~\ref{prop:decomposition}}
\label{app:proof_prop_decomposition}

We use the following factorization for the joint distribution induced by the
encoder-side latent variable:
\begin{align}
    p_{\text{data}}(\bm{x}_M,\bm{x}_U,\bm{z})
    &=
    p_{\phi}(\bm{z})\,
    p_{\text{data}}(\bm{x}_U \mid \bm{z})\,
    p_{\text{data}}(\bm{x}_M \mid \bm{x}_U,\bm{z}),
    \label{eq:joint_factor_data} \\
    p_{\theta}(\bm{x}_M,\bm{x}_U,\bm{z})
    &=
    p(\bm{z})\,
    \prod_{i\in U} p_{\theta}(x_i \mid \bm{z})\,
    \prod_{i\in M} p_{\theta}(x_i \mid \bm{x}_U,\bm{z}).
    \label{eq:joint_factor_model}
\end{align}

\paragraph{Restatement of Proposition~\ref{prop:decomposition}.}
Under~\eqref{eq:joint_factor_data}--\eqref{eq:joint_factor_model}, the joint KL
decomposes as
\begin{align}
    &D_{\mathrm{KL}}\!\left(
    p_{\text{data}}(\bm{x}_M,\bm{x}_U,\bm{z})
    \,\middle\|\,
    p_{\theta}(\bm{x}_M,\bm{x}_U,\bm{z})
    \right)
    \label{eq:joint_kl_decomposition_appendix} \\
    =&
    \underbrace{
    D_{\mathrm{KL}}\!\left(p_{\phi}(\bm{z})\,\middle\|\,p(\bm{z})\right)
    }_{\text{(I) MeanFlow--prior mismatch}}
    +
    \underbrace{
    \mathcal{T}(\bm{x}_U\mid \bm{z})
    +
    \mathcal{T}(\bm{x}_M\mid \bm{x}_U,\bm{z})
    }_{\text{(II) residual conditional dependence}} \nonumber \\
    &\quad+
    \underbrace{
    \sum_{i\in U}
    D_{\mathrm{KL}}\!\left(
    p_{\text{data}}(x_i\mid \bm{z})
    \,\middle\|\,
    p_{\theta}(x_i\mid \bm{z})
    \right)
    +
    \sum_{i\in M}
    D_{\mathrm{KL}}\!\left(
    p_{\text{data}}(x_i\mid \bm{x}_U,\bm{z})
    \,\middle\|\,
    p_{\theta}(x_i\mid \bm{x}_U,\bm{z})
    \right)
    }_{\text{(III) token-wise prediction error}},
    \nonumber
\end{align}
where the conditional KL terms are averaged over their conditioning variables.

\begin{proof}
First, substituting the factorizations in
\eqref{eq:joint_factor_data}--\eqref{eq:joint_factor_model} into the joint KL
gives
\begin{align}
    &D_{\mathrm{KL}}\!\left(
    p_{\text{data}}(\bm{x}_M,\bm{x}_U,\bm{z})
    \,\middle\|\,
    p_{\theta}(\bm{x}_M,\bm{x}_U,\bm{z})
    \right) \nonumber \\
    &=
    \mathbb{E}_{p_{\text{data}}(\bm{x}_M,\bm{x}_U,\bm{z})}
    \left[
    \log
    \frac{
    p_{\phi}(\bm{z})
    p_{\text{data}}(\bm{x}_U\mid \bm{z})
    p_{\text{data}}(\bm{x}_M\mid \bm{x}_U,\bm{z})
    }{
    p(\bm{z})
    \prod_{i\in U}p_{\theta}(x_i\mid \bm{z})
    \prod_{i\in M}p_{\theta}(x_i\mid \bm{x}_U,\bm{z})
    }
    \right].
    \label{eq:joint_kl_log_ratio}
\end{align}
Separating the three log-ratio terms yields
\begin{align}
    &D_{\mathrm{KL}}\!\left(
    p_{\text{data}}(\bm{x}_M,\bm{x}_U,\bm{z})
    \,\middle\|\,
    p_{\theta}(\bm{x}_M,\bm{x}_U,\bm{z})
    \right) \nonumber \\
    &=
    D_{\mathrm{KL}}\!\left(p_{\phi}(\bm{z})\,\middle\|\,p(\bm{z})\right)
    +
    D_{\mathrm{KL}}\!\left(
    p_{\text{data}}(\bm{x}_U\mid \bm{z})
    \,\middle\|\,
    \prod_{i\in U}p_{\theta}(x_i\mid \bm{z})
    \right) \nonumber \\
    &\quad+
    D_{\mathrm{KL}}\!\left(
    p_{\text{data}}(\bm{x}_M\mid \bm{x}_U,\bm{z})
    \,\middle\|\,
    \prod_{i\in M}p_{\theta}(x_i\mid \bm{x}_U,\bm{z})
    \right).
    \label{eq:kl_chain_rule_three_terms}
\end{align}

We next use the following identity to separate each conditional KL into a
dependency term and token-wise prediction terms. For any index set $A$ and
conditioning variable $C$, all conditional KLs below follow the averaged
definition in~\eqref{eq:conditional_kl_expectation_definition}:
\begin{align}
    &D_{\mathrm{KL}}\!\left(
    p_{\text{data}}(\bm{x}_A\mid C)
    \,\middle\|\,
    \prod_{i\in A}p_{\theta}(x_i\mid C)
    \right) \nonumber \\
    &=
    D_{\mathrm{KL}}\!\left(
    p_{\text{data}}(\bm{x}_A\mid C)
    \,\middle\|\,
    \prod_{i\in A}p_{\text{data}}(x_i\mid C)
    \right)
    +
    \sum_{i\in A}
    D_{\mathrm{KL}}\!\left(
    p_{\text{data}}(x_i\mid C)
    \,\middle\|\,
    p_{\theta}(x_i\mid C)
    \right).
    \label{eq:conditional_kl_decomposition}
\end{align}
To see this, start from the definition of KL and add and subtract
$\sum_{i\in A}\log p_{\text{data}}(x_i\mid C)$ inside the expectation:
\begin{align}
    &\mathbb{E}_{p(C)}
    \mathbb{E}_{p_{\text{data}}(\bm{x}_A\mid C)}
    \left[
    \log
    \frac{
    p_{\text{data}}(\bm{x}_A\mid C)
    }{
    \prod_{i\in A}p_{\theta}(x_i\mid C)
    }
    \right] \nonumber \\
    &=
    \mathbb{E}_{p(C)}
    \mathbb{E}_{p_{\text{data}}(\bm{x}_A\mid C)}
    \left[
    \log
    \frac{
    p_{\text{data}}(\bm{x}_A\mid C)
    }{
    \prod_{i\in A}p_{\text{data}}(x_i\mid C)
    }
    \right] \nonumber \\
    &\quad+
    \mathbb{E}_{p(C)}
    \mathbb{E}_{p_{\text{data}}(\bm{x}_A\mid C)}
    \left[
    \log
    \frac{
    \prod_{i\in A}p_{\text{data}}(x_i\mid C)
    }{
    \prod_{i\in A}p_{\theta}(x_i\mid C)
    }
    \right] \nonumber \\
    &=
    \mathcal{T}(\bm{x}_A\mid C)
    +
    \sum_{i\in A}
    D_{\mathrm{KL}}\!\left(
    p_{\text{data}}(x_i\mid C)
    \,\middle\|\,
    p_{\theta}(x_i\mid C)
    \right).
    \label{eq:conditional_kl_decomposition_expanded}
\end{align}
The last equality uses the definition of conditional total correlation for the
first term and marginalizes $p_{\text{data}}(\bm{x}_A\mid C)$ to
$p_{\text{data}}(x_i\mid C)$ in each token-wise term.

Applying~\eqref{eq:conditional_kl_decomposition} with
$(A,C)=(U,\bm{z})$ gives
\begin{align}
    &D_{\mathrm{KL}}\!\left(
    p_{\text{data}}(\bm{x}_U\mid \bm{z})
    \,\middle\|\,
    \prod_{i\in U}p_{\theta}(x_i\mid \bm{z})
    \right) \nonumber \\
    &=
    \mathcal{T}(\bm{x}_U\mid \bm{z})
    +
    \sum_{i\in U}
    D_{\mathrm{KL}}\!\left(
    p_{\text{data}}(x_i\mid \bm{z})
    \,\middle\|\,
    p_{\theta}(x_i\mid \bm{z})
    \right),
    \label{eq:visible_kl_decomposition}
\end{align}
where the right-hand side is averaged over $\bm{z}\sim p_{\phi}(\bm{z})$.
Applying the same identity with $(A,C)=(M,(\bm{x}_U,\bm{z}))$ gives
\begin{align}
    &D_{\mathrm{KL}}\!\left(
    p_{\text{data}}(\bm{x}_M\mid \bm{x}_U,\bm{z})
    \,\middle\|\,
    \prod_{i\in M}p_{\theta}(x_i\mid \bm{x}_U,\bm{z})
    \right) \nonumber \\
    &=
    \mathcal{T}(\bm{x}_M\mid \bm{x}_U,\bm{z})
    +
    \sum_{i\in M}
    D_{\mathrm{KL}}\!\left(
    p_{\text{data}}(x_i\mid \bm{x}_U,\bm{z})
    \,\middle\|\,
    p_{\theta}(x_i\mid \bm{x}_U,\bm{z})
    \right),
    \label{eq:masked_kl_decomposition}
\end{align}
where the right-hand side is averaged over
$(\bm{x}_U,\bm{z})\sim p_{\text{data}}(\bm{x}_U,\bm{z})$.

Finally, substituting \eqref{eq:visible_kl_decomposition} and
\eqref{eq:masked_kl_decomposition} into
\eqref{eq:kl_chain_rule_three_terms} yields
\begin{align}
    &D_{\mathrm{KL}}\!\left(
    p_{\text{data}}(\bm{x}_M,\bm{x}_U,\bm{z})
    \,\middle\|\,
    p_{\theta}(\bm{x}_M,\bm{x}_U,\bm{z})
    \right) \nonumber \\
    =&
    \underbrace{
    D_{\mathrm{KL}}\!\left(p_{\phi}(\bm{z})\,\middle\|\,p(\bm{z})\right)
    }_{\text{(I) MeanFlow--prior mismatch}}
    +
    \underbrace{
    \mathcal{T}(\bm{x}_U\mid \bm{z})
    +
    \mathcal{T}(\bm{x}_M\mid \bm{x}_U,\bm{z})
    }_{\text{(II) residual conditional dependence}} \nonumber \\
    &\quad+
    \underbrace{
    \sum_{i\in U}
    D_{\mathrm{KL}}\!\left(
    p_{\text{data}}(x_i\mid \bm{z})
    \,\middle\|\,
    p_{\theta}(x_i\mid \bm{z})
    \right)
    +
    \sum_{i\in M}
    D_{\mathrm{KL}}\!\left(
    p_{\text{data}}(x_i\mid \bm{x}_U,\bm{z})
    \,\middle\|\,
    p_{\theta}(x_i\mid \bm{x}_U,\bm{z})
    \right)
    }_{\text{(III) token-wise prediction error}},
    \nonumber
\end{align}
which is Equation~\eqref{eq:joint_kl_decomposition_appendix}.
\end{proof}

\section{Experimental Details}
\label{app:experimental_details}

\paragraph{Data preprocessing.}
For LM1B, following~\citet{he2022diffusionbert, lou2024discrete}, we use the standard train/test split and tokenize the corpus with the BERT tokenizer~\citep{bert}. We pad and truncate sequences to a fixed length of $N=128$. For OWT, following~\citet{lou2024discrete, sahoo2024mdlm}, we reserve the last 100K documents as a held-out evaluation set. To match the encoder used for OWT, we tokenize with the Qwen2 tokenizer and form sequences of length $N=1024$.

\paragraph{Embedding models for the continuous latent.}
We instantiate the pretrained embedding model $g_\phi$ differently across datasets to match their tokenization and domain. On LM1B, we use BERT-Base~\citep{bert}. Because the BERT embedding is high-dimensional, we split each 128-token sequence into four groups, average embeddings within each group, and apply PCA whitening to obtain a compact 128-dimensional representation from the original 768-dimensional embeddings. On OWT, we use Qwen3-Embedding-0.6B~\citep{qwen3embedding}; because it supports flexible output dimensionality, we use the last-layer embedding with dimension 32.

\paragraph{Models and optimization.}
As described in Section~\ref{sec:method:architecture}, both the discrete denoising backbone and the MeanFlow backbone are parameterized by a Diffusion Transformer (DiT)~\citep{peebles2023scalable}, following the architecture used in~\citet{lou2024discrete}. We use 12 layers, hidden dimension 768, 12 attention heads, and $\sigma=1$. The latent dimensions are $D_z=128, d_z=32$ for LM1B and $D_z=32, d_z=16$ for OWT. We follow the common diffusion-LM training recipe~\citep{sahoo2024mdlm}: AdamW with a constant learning rate of $3\times10^{-4}$ after 2.5K warm-up iterations, together with exponential moving average (EMA) decay 0.9999. For LM1B, we train for 1M iterations with batch size 512. For OWT, we train for 50K iterations with batch size 512. The MDLM and SEDD results reported on OWT are from our own retrained baselines, using the same Qwen2 tokenizer, data preprocessing, model size, batch size, optimizer, learning-rate schedule, EMA setting, and 50K-step training budget as RMDM.

\paragraph{Encoder sensitivity.}
We replace Qwen3-Embedding-0.6B with ModernBERT-large~\citep{modernbert} while keeping the downstream architecture and latent interface fixed. ModernBERT supports the full OWT sequence length, and we reduce its 1024-dimensional outputs to 32 dimensions with PCA. Because this variant converges more slowly, we compare it with an MDLM baseline trained for the same approximately 60K-step budget; the Qwen3 results use the 50K-step models from the main experiment. Tables~\ref{tab:encoder_ablation} and~\ref{tab:encoder_ablation_judge} show that both encoder variants outperform their matched baselines across sampling budgets in terms of GenPPL and LLM-judge scores, although the improvement is smaller with ModernBERT. This result suggests that the benefit is not specific to one encoder family, while its magnitude depends on the representation source.

\begin{table}[h]
    \centering
    \setlength{\tabcolsep}{4.2pt}
    \renewcommand{\arraystretch}{1.05}
    \begin{tabular}{@{}lrrrrrrr@{}}
        \toprule
        Model & 1024 & 512 & 256 & 128 & 64 & 32 & 16 \\
        \midrule
        MDLM (50K) & 49.68 & 59.71 & 72.85 & 84.89 & 105.33 & 138.19 & 213.44 \\
        RMDM--Qwen3 (50K) & \textbf{34.58} & \textbf{40.48} & \textbf{45.08} & \textbf{52.06} & \textbf{61.54} & \textbf{85.84} & \textbf{126.58} \\
        MDLM (60K) & 48.17 & 55.61 & 67.06 & 81.20 & 103.66 & 134.82 & 218.39 \\
        RMDM--ModernBERT (60K) & \textbf{46.07} & \textbf{51.09} & \textbf{61.33} & \textbf{74.59} & \textbf{89.71} & \textbf{124.86} & \textbf{191.71} \\
        \bottomrule
    \end{tabular}
    \caption{Encoder sensitivity on OWT measured by GenPPL ($\downarrow$). Each RMDM variant is compared with the MDLM baseline at the corresponding training budget.}
    \label{tab:encoder_ablation}
\end{table}

\begin{table}[h]
    \centering
    \scriptsize
    \begin{tabular}{@{}lrrrrrrr@{}}
        \toprule
        Model & 1024 & 512 & 256 & 128 & 64 & 32 & 16 \\
        \midrule
        MDLM (50K) & $2.359{\pm}.078$ & $2.285{\pm}.072$ & $2.121{\pm}.064$ & $2.059{\pm}.070$ & $2.035{\pm}.063$ & $1.953{\pm}.056$ & $1.730{\pm}.059$ \\
        RMDM--Qwen3 (50K) & $\mathbf{2.406{\pm}.084}$ & $\mathbf{2.441{\pm}.085}$ & $\mathbf{2.340{\pm}.075}$ & $\mathbf{2.289{\pm}.069}$ & $\mathbf{2.168{\pm}.063}$ & $\mathbf{2.117{\pm}.055}$ & $\mathbf{1.867{\pm}.057}$ \\
        MDLM (60K) & $2.348{\pm}.079$ & $2.246{\pm}.077$ & $2.227{\pm}.070$ & $2.125{\pm}.070$ & $1.992{\pm}.063$ & $1.879{\pm}.055$ & $1.762{\pm}.056$ \\
        RMDM--ModernBERT (60K) & $\mathbf{2.391{\pm}.083}$ & $\mathbf{2.402{\pm}.085}$ & $\mathbf{2.277{\pm}.075}$ & $\mathbf{2.211{\pm}.072}$ & $\mathbf{2.102{\pm}.064}$ & $\mathbf{1.945{\pm}.055}$ & $\mathbf{1.832{\pm}.057}$ \\
        \bottomrule
    \end{tabular}
    \caption{Encoder sensitivity on OWT measured by LLM-judge overall-quality score ($\uparrow$; mean $\pm$ 95\% confidence-interval half-width). Each RMDM variant is compared with the MDLM baseline at the corresponding training budget.}
    \label{tab:encoder_ablation_judge}
\end{table}

\section{LLM-judge Evaluation}
\label{app:llm_judge}

We use an LLM judge to complement GenPPL because evaluator perplexity is only a proxy for generation quality and can miss aspects such as discourse coherence, degenerate repetition, and natural readability. Following this motivation, each generated passage is judged as a standalone sample. We use DeepSeek-V4-Flash~\citep{deepseekai2026deepseekv4} as the primary judge model and report the average \texttt{overall} score in the main tables. We additionally evaluate the same samples with Gemini-3.1-Flash-Lite~\citep{gemini31flashlite} as an independent judge. For each method and sampling budget, both judges evaluate all 256 generated passages. Tables~\ref{tab:judge_ci_deepseek} and~\ref{tab:judge_ci_gemini} report mean scores with 95\% confidence intervals. The agreement in relative rankings indicates that the main trend is not specific to a single judge. The judge is asked to return minified JSON with five 1--10 scores: \texttt{fluency}, \texttt{coherence}, \texttt{repetition}, \texttt{readability}, and \texttt{overall}.

\begin{table}[h]
    \centering
    \scriptsize
    \begin{tabular}{@{}lrrrrrrr@{}}
        \toprule
        Model & 1024 & 512 & 256 & 128 & 64 & 32 & 16 \\
        \midrule
        MDLM & $2.359{\pm}.078$ & $2.285{\pm}.072$ & $2.121{\pm}.064$ & $2.059{\pm}.070$ & $2.035{\pm}.063$ & $1.953{\pm}.056$ & $1.730{\pm}.059$ \\
        SEDD & $2.293{\pm}.083$ & $2.227{\pm}.075$ & $2.172{\pm}.064$ & $2.074{\pm}.066$ & $1.930{\pm}.058$ & $1.926{\pm}.063$ & $1.719{\pm}.059$ \\
        RMDM & $2.406{\pm}.084$ & $2.441{\pm}.085$ & $2.340{\pm}.075$ & $2.289{\pm}.069$ & $2.168{\pm}.063$ & $2.117{\pm}.055$ & $1.867{\pm}.057$ \\
        \bottomrule
    \end{tabular}
    \caption{DeepSeek-V4-Flash overall-quality scores (mean $\pm$ 95\% confidence-interval half-width; $n=256$ per cell).}
    \label{tab:judge_ci_deepseek}
\end{table}

\begin{table}[h]
    \centering
    \scriptsize
    \begin{tabular}{@{}lrrrrrrr@{}}
        \toprule
        Model & 1024 & 512 & 256 & 128 & 64 & 32 & 16 \\
        \midrule
        MDLM & $1.594{\pm}.061$ & $1.434{\pm}.061$ & $1.297{\pm}.056$ & $1.240{\pm}.053$ & $1.113{\pm}.039$ & $1.020{\pm}.017$ & $1.000{\pm}.000$ \\
        SEDD & $1.625{\pm}.060$ & $1.473{\pm}.062$ & $1.371{\pm}.060$ & $1.277{\pm}.055$ & $1.156{\pm}.045$ & $1.055{\pm}.028$ & $1.008{\pm}.011$ \\
        RMDM & $1.781{\pm}.051$ & $1.801{\pm}.049$ & $1.637{\pm}.059$ & $1.590{\pm}.061$ & $1.408{\pm}.061$ & $1.184{\pm}.048$ & $1.027{\pm}.020$ \\
        \bottomrule
    \end{tabular}
    \caption{Gemini-3.1-Flash-Lite overall-quality scores (mean $\pm$ 95\% confidence-interval half-width; $n=256$ per cell).}
    \label{tab:judge_ci_gemini}
\end{table}

\paragraph{Prompt.}
\begin{quote}
\small\ttfamily
Please act as an impartial judge and evaluate the quality of the generated text displayed below...

Scoring rubric, each from 1 to 10:

- fluency: grammar, phrasing, and local readability

- coherence: logical flow, topic consistency, and sentence-to-sentence continuity

- repetition: avoidance of loops, duplicate ideas, and degenerate repetition

- readability: how natural and human-readable the passage feels overall

- overall: your final holistic judgment

Instructions:

- Judge the text as a standalone passage.

- Do not assume hidden context beyond the text itself.

- Penalize malformed special tokens, obvious hallucinated structure, or text that reads like corrupted sampling.

- Do not reward verbosity.

Return ONLY valid minified JSON...

Generated text:

<<<TEXT

\{text\}

TEXT>>>
\end{quote}

\section{Inference Efficiency}
\label{app:inference_efficiency}

We measure sampling efficiency on one NVIDIA A800 80GB GPU with the DDPM sampler, batch size 8, and sequence length 1024, after warm-up. RMDM and MDLM share the same DiT backbone; RMDM additionally conditions each block on a latent sampled once from the prior. The pretrained encoder and MeanFlow network are not invoked during inference. RMDM uses 2.72 GB of weight memory and 18.93 GB peak sampling memory, compared with 2.43 GB and 18.64 GB for MDLM.

\begin{table}[h]
    \centering
    \setlength{\tabcolsep}{5pt}
    \begin{tabular}{@{}rrrrr@{}}
        \toprule
        Steps & MDLM GenPPL & MDLM tok/s & RMDM GenPPL & RMDM tok/s \\
        \midrule
        1024 & 49.68 & 80   & 34.58  & 72 \\
        512  & 59.71 & 159  & 40.48  & 144 \\
        256  & 72.85 & 318  & 45.08  & 287 \\
        128  & 84.89 & 636  & 52.06  & 573 \\
        64   & 105.33 & 1269 & 61.54  & 1142 \\
        32   & 138.19 & 2520 & 85.84  & 2263 \\
        16   & 213.44 & 4951 & 126.58 & 4442 \\
        \bottomrule
    \end{tabular}
    \caption{Generation quality and measured throughput at matched sampling-step budgets.}
    \label{tab:full_inference_efficiency}
\end{table}

At the same step count, RMDM retains approximately 90\% of MDLM's throughput. At matched quality, however, RMDM requires roughly one quarter as many steps: for example, GenPPL $\leq 50$ is reached by MDLM at 1024 steps (80 tok/s) and by RMDM at 256 steps (287 tok/s), corresponding to a $3.6\times$ throughput improvement. Similar comparisons give $3.5$--$3.6\times$ improvements across the quality range reported in Table~\ref{tab:iso_quality_efficiency}.

\section{Training Cost}
\label{app:training_cost}

\paragraph{Wall-clock training time.}
We trained the models on 4 NVIDIA A800 GPUs. In our current implementation,
the MDLM baseline requires approximately 3 days and 5 hours for 50K training
steps, while the RMDM decoder training stage requires approximately 5 days for
the same number of steps. This measured overhead mainly comes from our
uncached implementation: at each training step, we compute the pretrained
encoder representation online and then map it through the MeanFlow-aligned
latent module before conditioning the decoder. Since the training corpus is
fixed, these encoder representations can in principle be precomputed and
cached, which would remove most of this repeated computation. We did not use
such caching in our reported runs due to storage constraints.

\paragraph{MeanFlow alignment stage.}
The additional latent-alignment stage is comparatively lightweight. It is
trained for 10K steps and takes approximately 1 day and 15 hours under the same
hardware setting. This number should also be interpreted as the cost of the
uncached implementation, since online encoder representation extraction
contributes to the measured time. Overall, the wall-clock times above report
our implementation cost rather than an inherent training-time lower bound of
RMDM.

\section{Examples}
\label{app:examples}
\subsection{Detailed Numerical Example with Gaussian Distribution}
\label{app:example_calculation}

We provide a concrete analytical example using Gaussian distributions that strictly follows the causal graph in Figure~\ref{fig:dependency_graph}. This example demonstrates how the latent variable $\bm{z}$ explains correlations between masked tokens that the partial context $\bm{x}_U$ cannot capture.

\paragraph{Model Setup.}
Consider a linear Gaussian structural equation model where the global latent $\bm{z}$, the observed context $\bm{x}_U$, and the masked tokens $x_M^1, x_M^2$ are scalar random variables:
\begin{align}
    \bm{z} &\sim \mathcal{N}(0, 1), \\
    \bm{x}_U &= \bm{z} + \epsilon_U, \\
    x_M^1 &= \bm{z} + \alpha \bm{x}_U + \epsilon_1, \\
    x_M^2 &= \bm{z} + \beta \bm{x}_U + \epsilon_2,
\end{align}
where $\epsilon_U, \epsilon_1, \epsilon_2 \sim \mathcal{N}(0, \sigma^2)$ are independent noise terms. This structure matches the causal graph where $\bm{z}$ acts as a confounder influencing all variables, while $\bm{x}_U$ has direct edges (controlled by $\alpha$ and $\beta$) to the masked tokens.

\paragraph{Conditional Dependency Without Latent.}
When $\bm{z}$ is unobserved (standard non-latent modeling), we condition only on $\bm{x}_U$. Although $\bm{x}_U$ provides partial information about $\bm{z}$, it does not fully recover it. The posterior distribution is:
\begin{equation}
    p(\bm{z} \mid \bm{x}_U) = \mathcal{N}(\mu_{z|u}, \sigma^2_{z|u}), \quad \text{where } \sigma^2_{z|u} = \text{Var}(\bm{z} \mid \bm{x}_U) = \frac{\sigma^2}{1 + \sigma^2}.
\end{equation}
Since $\bm{z}$ remains uncertain (variance $\sigma^2_{z|u} > 0$), it acts as a common noise source inducing correlation between $x_M^1$ and $x_M^2$. Given $\bm{x}_U$, the terms $\alpha \bm{x}_U$ and $\beta \bm{x}_U$ are constant, so the conditional covariance is:
\begin{equation}
    \text{Cov}(x_M^1, x_M^2 \mid \bm{x}_U) = \text{Var}(\bm{z} \mid \bm{x}_U) = \frac{\sigma^2}{1 + \sigma^2}.
\end{equation}

Setting $\sigma = 1$ for concreteness, we compute:
\begin{align}
    \text{Var}(\bm{z} \mid \bm{x}_U) &= \frac{1}{2}, \\
    \text{Cov}(x_M^1, x_M^2 \mid \bm{x}_U) &= \frac{1}{2}, \\
    \text{Var}(x_M^i \mid \bm{x}_U) &= \text{Var}(\bm{z} \mid \bm{x}_U) + \text{Var}(\epsilon_i) = \frac{1}{2} + 1 = \frac{3}{2}, \\
    \rho &= \frac{\text{Cov}(x_M^1, x_M^2 \mid \bm{x}_U)}{\sqrt{\text{Var}(x_M^1 \mid \bm{x}_U)\text{Var}(x_M^2 \mid \bm{x}_U)}} = \frac{1/2}{3/2} = \frac{1}{3}.
\end{align}
For Gaussian variables, the dependency gap (mutual information) is:
\begin{equation}
    \mathcal{T}(x_M^1, x_M^2 \mid \bm{x}_U) = -\frac{1}{2} \ln(1 - \rho^2) = -\frac{1}{2} \ln\left(1 - \frac{1}{9}\right) > 0.
\end{equation}
This positive value quantifies the information loss incurred by assuming independence in parallel sampling.

\paragraph{Conditional Independence With Latent.}
In our RMDM framework, we condition on both $\bm{x}_U$ and the sampled latent $\bm{z}$. When $\bm{z}$ is fixed, the only remaining randomness comes from the independent noise terms $\epsilon_1, \epsilon_2$. Therefore:
\begin{equation}
    \text{Cov}(x_M^1, x_M^2 \mid \bm{x}_U, \bm{z}) = \mathbb{E}[\epsilon_1 \epsilon_2] = 0.
\end{equation}
The masked tokens become conditionally independent: $p(x_M^1, x_M^2 \mid \bm{x}_U, \bm{z}) = p(x_M^1 \mid \bm{x}_U, \bm{z}) p(x_M^2 \mid \bm{x}_U, \bm{z})$, and the conditional dependency gap vanishes: $\mathcal{T}(x_M^1, x_M^2 \mid \bm{x}_U, \bm{z}) = 0$.

\paragraph{Summary.}
By introducing $\bm{z}$, we reduce the residual correlation from $\sigma^2/(1+\sigma^2)$ to $0$. The latent variable ``explains away'' the common fluctuation caused by the unobserved global context, making the independence assumption valid for parallel generation.

\subsection{Generated Samples on OpenWebText}
\label{app:generated_samples}

We provide generated samples on OpenWebText at four sampling budgets.

\subsubsection{1024 Sampling Steps}
\label{fig:generated_samples_steps1024}
\begingroup\small
\texttt{<|im\_end|>} interest rates.

He said that overall, not including employment, they show lower interest rates than inflation in comparison to a higher level for older people.

While many people are indebted, many have been given a credit line unsupported on home ownership as the UK rose.

It is crucial to look at in real terms. In 2008 the pound hit back on the economy's growth, which was more than it was in 2007. Back in 2008 was \pounds 204,000 with the average adjusted pay rate of 62\%. It is important to bear to recognise, that much of the rise in the debt level was mainly due to a tightness of inflation, which came through deflation. However, as a result, we saw an increase in real value.

Inflation dragged the pound to a broader level.

He was forecasting the Federal Reserve Bank's strength in April. More than a month earlier, for the same period in April, the real debt level was about 20\% — approaching 27\% while the UK Government had experienced 28\%.

The implication, seems the government will raise debt levels less than a month earlier. However, while rates were only 23.04 in the UK on 19 July it reflects a reduction in mortgages for technical contracts, which it would expect between November and November.

An additional element of confidence level will start to be added to in new contracts, including requiring more purchases of a property to sell as an asset.

Mr Cherry said: "For the most recent year we experienced a rapid depreciation."

Stability of interest

He said that the higher rates are likely due to a slightly higher interest level, but leaves the possibility of employment over a wider period of the government's debt policy, without the potential for economic growth.

"Given the urgency level of the UK Government to start action on the other side to affect economic activity, given this context, I think there is a temporary start of more widespread economic activity."

The latest quarterly report suggests interest rates between 5.3\% and 4.1\%, though that the current "stable" funding status means, when interest rates continued to fall, the level of wages, which was large of the economy at this time and how much real pay kept falling.

This sector is subject to a high level of joblessness, with real interest rates relating to real asset holders.

Mr Cherry added that the UK economy has become almost certain, and with the coming boom with conditions hitting hard when it came to accelerating economic growth, the UK Government's overall impact on employment continued to be constrained by the nominal interest rate of 5 per cent, in 2011 and an estimated rate of wage increases in 1978, 1991, 1904 and 1978, to the current January levels for February 2015.

Real employment keeps falling

While the UK Government's nominal interest rate of 12 per cent fell in two years at 5.1 and down 4.8\%, there was a stand-off in the late 2000s when the biggest drag on employment was lower employment worldwide.

Yehy: "As compared with surveys of low employment, in large parts, that is understood that debt levels, or interest rates, steadily declined in 2007."

He said from the figures "the average real interest rate in 2014 was 7.4 and up 5 percent, indicating that most of us now have a higher level of employment."

He also said the higher balance sheets may have helped force students into spending more on student loans, thus increasing their income. Recent figures are particularly worrying as the UK ranks 36pc of students.

ADAPT

"A lot of young people are under this state who are losing employment - a number that is for the next parliament, I must say, yet.

"It has been devastating - who is living here with little hope or peace."

Yehy said that despite the overall rise in employment activity, between the first nine months last year the UK government was by 11 per cent, compared with 42 among four months over in a decade, then up by three or four years.

"This has been encouraging for lots of people in England and Ireland, in terms of employment, for the foreseeable future."

Yehy is concerned because of 27 unemployment extensions under which led to further job increases and surveys showing UK government alone experienced rising unemployment.

Unemployment is responsible for individual jobs-related economic losses as employment is a small part of people losing employment," he said.

"Sometimes when I hear people talk about unemployment extension cuts, they see - I can say, 'The cuts worked!"

'I don't think they're the best alternative to long-term unemployment'

"So I think people are a little unhappy. Some may sometimes have less savings than others.

"I think in terms of benefits, maybe it doesn't produce results, but I am told they will\texttt{<|im\_end|>}
\par\endgroup

\subsubsection{512 Sampling Steps}
\label{fig:generated_samples_steps512}
\begingroup\small
\texttt{<|im\_end|>}. He had one of the last systematic trips to the Rocky Mountains and Hollywood. Two years later, one of the first to be America’s reporter was to be on Arizona’s Ellulia Island, Hawaii. He brings us three stories from Camp Everest, when The Post introduced him as a reporter in January. He had no plans for hosting the program.

Frank Marousy, we arrived in La Mesa, on the Rocky Mountain, from the Grand Canyon. And I’m twenty years old and camped in La Mesa. In high school, I started to have a child, and got a job. And I thought I was fun.

Authorities wanted me to know your story. I wondered why you’re in the news when you had an interview with one former Indonesian journalist, the role of scientists on the changing carbon chain.

Simon found it hard to dig up a series of problems that she needed us to continue her activities as she told The Guardian on where her job and priorities came up at the time.

“Could you explain a little bit about the challenges that have been in the laboratory for years? Well, this is what the IPCC was focusing on. Energy, and they had identified these problems, were the only challenges,” Mr. Pauler asked two former editors of The Guardian and The Guardian.

We detailed on discussion of the available water resources on climate change. The Prime Minister reported to Malta and asked Mr. Simon as he followed up on climate negotiations, and asked what information he learned on the issues of climate change, he did ask about broader insights into seeing press releases on sites: “I have seen your interviews. has there was anything controversial?

“ Mr. Pauler said, “There was information about climate change. Facebook, we were doing the research. I’m an ecologist at Stanford University. I’m on tour. So I only got the information when I was overseas. Catching news drew me on there; which, consequently, allowed me to learn about climate changes more slowly and more widely. And I see the addresses of some very prominent working leaders and others about how much it has influenced her because around the globe it inspired me. I had to support the prime minister, to support the national crisis in Cambodia, ... to Beijing’s climate policies, Frank Marousy,” told The Guardian. We also spoke with Mr. Pauler about the new climate plan. They both agreed last week that they would hold meetings with Mr. Simon after a special session. Mr. Pauler taped a long video clip on Wednesday in the reporters’ office when he and fellow reporters discussed climate change on the scientists panel that is now organising his annual retreat.

Mr. Simon sat in a room with Mr. Pauler and she talked about addressing the financial difficulties focused on reducing emissions or helping their farmers defend themselves to frictions higher than their European ones. He also said, “It’s important to make sacrifices to be made from American food stocks, as well as our environment, from American food corporations — that they would protect themselves from food hazards and also have financial protection against food tariffs, because American farmers would be supported by their compensation for [security].”

Mr. Simon said, “To me, something that you say is consistent with my understanding of going to the energy policy being on how we should be reducing carbon pollution, she said. I think many of the kind of priorities that we need, there — we need to prioritize.”

He also added, “I would say we’re going to be so vulnerable because of the fact that there are already climate change and economic impacts out there mentioned. But that’s what my answer: Let’s make this country look better, so the health of our planet, not just the carbon-neutral environment but the health of people, and where the climate change is, that we need to start that change of climate change for everybody,” Mr. Simon told us.

“At global levels, when we added carbon emissions in the way that we’re pushing for than climate change, the science and energy policy is being de-recorded and used.

We have released some numbers for the last two years, Mr. Simon said. And the results of the new MGH report show that Australia is planning to reduce greenhouse gas emissions by a global level of 22.5 percent by 2050 from the 2010 current levels. They are also trying to limit emissions from the order of two percent in 2060 to zero by 2070, according to this aim of the UN report.

If the global economy was going to achieve the least 8.2 percent of 2050 in the amount of CO2 emissions to nations, doing so instead is immoral and immoral. I don’t think it should — even if it’s federal policy — ” he said. “That is impossible.” I like to think people say that we’re not getting to ourselves right now.

“The success of this international study is that there are very specific, measurable differences between these countries,” Mr. Simon said, referring to the provincial government carbon\texttt{<|im\_end|>}
\par\endgroup

\subsubsection{256 Sampling Steps}
\label{fig:generated_samples_steps256}
\begingroup\small
\texttt{<|im\_end|>} people who knew the TV footage provided by the deputies. About 20 police officers took the show of a water truck photographed, and saw television crews right the other side of the fire and fire cars.

One of it later came from Philadelphia fire departments.

“That would be a deputy chief officer, and then he got to his car,” said Philadelphia Police Sarah at the television station. Afterward, once he got closer, he looked at the “area of the Sheriff’s works, and it happened.”

“He said, What do you want to happen, John?”

“We have four trucks. We have five ones. I’m concerned about that because it’s different. And part of the plan was that they wanted the drivers to look at cars.”

Now with most firefighters, deputies responded, “You can’t identify any anybody.” Instead, local reporters tweeted out stories about a picture that sparked outrage. With local reporters like CNN and CBS broadcast wires to display their images depicting a Philadelphia flag as their own trucks, social media consolidated.

200 of the Philadelphia residents were angry that BPD officers were trying to plot the images into firefighters.

“There have been a lot of reactions from those guys who know that in these things, it seems more sinister,” said Abby Yates from Berryessa. Talking about it, “It’s convincing that they have something that they want because you’re there to see what’s being captured and that you will be right to go.”

On Facebook, Yates observed that there was a big quick reaction from the Philadelphia fire. It wasn’t a typical Philadelphia response.

The Telegraph reported some local concussion video snippets from the Harrisburg interview of the fire cars. The following link came from AT\&T:

“We you saw is that fire car customers have to pay for rent for immediate repairs,” Yates said.

Here is what the statement was: “If you see the same thing in a lot of a firefighter’s cars, would be some of them will have a life” to recease.

Cornell 71’s Thomas Miller said that he was one of the journalists in Philadelphia to work the panoramic video.

“All reporters are also interviewed by the Hall of Justice,” he said. “Look at technology and better, look at the cameras and then it’s different.” Miller also said that he was willing and determined to make such a video.

Jokes describing a toic reporter as well as a YouTube climber, Miller described to CNN, “it was really cool, this is really a way to get this video to happen to first responders of Division 5/6 of Philadelphia.”

"How did you make the video happen? Why this stuff?" Phillips asked. “Do you think there was responsible equipment with equipment and people,” he answered. "Do you support the America Foundation for Change and donate so much and have your support and help?"\texttt{<|im\_end|>}Most of the races got started in Emanuel's campaign for assistant Mayor Ped Pederson (where here are Philadelphia’s 18 editors) and last year was Eric Leverdehan, the owner of the newspaper, the owner of the Press and the owner of Wilson, who paid him huge amounts of money to help fund.

But in November, it started in earnest and went ahead. And it wasn't for single parent's children where the figure started growing in in the midly-1950s and '70s suburban side; it was the four other children of Debbie Leverdehan, who allegedly expressed a love for Ryan Schultz, in much of Sandy's warding.

The story happened shortly Debbie was 69 when her mother and her brother died recently in battling cancer from her family at the Center section of events. Sandy's aunt Kyle Schultz was visiting Rhode Island for a weekend before Sandy's dad died aged 61. Two of his sisters, Julia Owens, and Wayne Owens, had passed years from, and Lee, and Christine, Kelly. They first came here last Saturday from First Avenue in the Mapleland, and now four others in Cyslewood, Long Beach, when she was young five. And as soon as last was Debbie's trip, Debbie said of her son Sandy's getting to know she loved having Tim, Lee Schultz. She was a little beautiful. How was it? Debbie didn't even know how much. It was a heroic move because it was a beautiful moment.

"A big family had really cried for the mayor and Christine when she came on going. And this has happened from Kyle's dad to come the whole other way. In fact, it was actually on Debbie's end." Sandy's daughter, Johanna Schultz and the Julia Owens family, was again turned blue Friday by her father Dick, who predicted won again be running like her mother thought they were.

"I've got situations like this on here. There's a candidate that doesn't allow racial discrimination, I wouldn't say they'll get to this point," Debbie.

Other kids will be rallying against the plan, for the night of Sunday's quarter and a half season.

Photo: AP

Sunday 17-28\texttt{<|im\_end|>}
\par\endgroup

\subsubsection{128 Sampling Steps}
\label{fig:generated_samples_steps128}
\begingroup\small
\texttt{<|im\_end|>} We also expect about two or 10 volunteers who will introduce Bitcoin to our community.

As the first of those who write about Bitcoin there, the Canadian Bitcoin Foundation is an example — it is actually a bit of a silver roof to keep the money flowing in…

Bitcoin “root” represents an army of households that will feed them electricity and help us convert their income into urban and military — we will spread wealth all over the network.

Ensure that, check out a way to use Bitcoin.info and a blog post.

Create a Bitcoin cant repository and have a regular Bitcoin account.

Morgan’s Bricket Update

Before any new Bitcoin software is built in accordance with the initial materials we are going to start our for this purpose. Every Bitcoin node is measured on a character propagation and, eventually, we will happen a few things before it is all rolled on to the user. Bitcoin blocks is a fundamental specification with many types and multiple days., when you design the protocol, with this example, we would recommend that the Bitcoin block finish over time to do so. Then, that at the beginning of your BTC block development from Bitcoin Blocks is not interrupted by a 3rd fork. Bitcoin blocks can then be used as interrupters, even if the type has a format or to develop them as a way to ...read; the propagation of the Bitcoin blocks was not the way to handle transactions, and would be used by a miner. Based on what last in the above description, we still also recommend putting one or two spikes of a few thousand times throughout all of your BTC block. Try timed the “bittleneck” of the following block before the next transaction and change to 30 minutes a time if you want to get down to these transactions – 10 minutes – 30 as a BTC block. Design of these updates will be presented in the ends – use of the timestamp for Bitcoin Blocks for a broadcast at the end of the/block, resulting in a reverting process until the block is to complete in the preceding months.

We argue a lot against this structure, Bitcoin Bitcoinism, which mentioned that a certain type of Bitcoin enthusiast is likely to actively participate in Bitcoin developments. However, we believe these types of projects should be more of the same as before: to allow the community to understand the hard forks that may occur in the current Bitcoin code, and expect to be frequently replaced with more forks in the future.

It makes sense that the Bitcoin developers and the Bitcoin community would like to make our development easier, and we will try and not released them just so that it gets more sense. Before proceeding by saying that without a hard fork you may want to add another version or another version of Bitcoin, we caution you not to recognize the prominence of changes, it can take some time. The Bitcoin patchending has been updated and in the last 6 days, and our community will be working on this extension closely with the Bitcoin project. But as the fact of this, note of the state of the Bitcoin concept will help to change as to how Bitcoin should go about functions. For those who reach the Bitcoin community, this extension will help the Bitcoin community understand the future of Bitcoin, and their own future.

Although it will take into account details of the current standard of development Bitcoin Vision. In the Bitcoin community, as well as the Light Architecture’s development and the community expertise in Bitcoin Bitcoin development, we recognize that this is a high priority for the Bitcoin community, and hence, it is more important to provide support for others in our community. Therefore, as a result of the Light Architecture’s development, we organize the Bitcoin Community workshop for conversations about the standard of development Bitcoin Vision.

By the way,

The early developers of the Bitcoin Society also received informal feedback from a moderator on our subreddit, called btc, that preceded the discussions. Participants received various changes in their project, some changes were random, and so often came from the design and implementation of the current Bitcoin development software, the Bitcoin Developers posted a thread to explain how they proposal it rather than tell us that there are currently no added versions.

Coin Editions on the Internet

You are select the first set from the basic process to use the Bitcoin project: on the page of the Bitcoin development website, you will select the version to send back to you on the Bitcoin publication. This page contains relevant information from the Bitcoin Republic, with links to draft papers, papers which will be gathered in the format below: Select the current version of their foundation; for those who submitted their project, they can click you to download a PDF copy, and they will select the complete version of the foundation. If you downloading a copy agreed to pay a fee, strip the rest with it and then pay for their work.

Once the Bitcoin development is in the run, you will click on our website to download a stock picture, you will find one of the four sets: either one will the development project use on hand for the infrastructure, the other set for the distribution, you will also know a digital supply unit and you will add one other code to the \texttt{<|im\_end|>}
\par\endgroup

\end{document}